\newtheorem{theorem}{Theorem}
\newtheorem{definition}{Definition}
\newtheorem{corollary}{Corollary}
\newcommand\numberthis{\addtocounter{equation}{1}\tag{\theequation}}
\definecolor[named]{ACMBlue}{cmyk}{1,0.1,0,0.1}
\definecolor[named]{ACMYellow}{cmyk}{0,0.16,1,0}
\definecolor[named]{ACMOrange}{cmyk}{0,0.42,1,0.01}
\definecolor[named]{ACMRed}{cmyk}{0,0.90,0.86,0}
\definecolor[named]{ACMLightBlue}{cmyk}{0.49,0.01,0,0}
\definecolor[named]{ACMGreen}{cmyk}{0.20,0,1,0.19}
\definecolor[named]{ACMPurple}{cmyk}{0.55,1,0,0.15}
\definecolor[named]{ACMDarkBlue}{cmyk}{1,0.58,0,0.21}
\definecolor{darkgreen}{rgb}{0,0.5,0}
\newcommand{\fct}{fairness--confusion tensor}
\newcommand{\PFOP}{performance--fairness optimality problem}
\newcommand{\LAFOP}{least-squares accuracy--fairness optimality problem}
\newcommand{\MLAFOP}{multilinear least-squares accuracy--fairness optimality problem}
\DeclareMathOperator*{\argmin}{arg\,min}
\DeclareMathOperator*{\sgn}{sgn}
\newcommand{\Prob}{{\sf Pr}}
\newcommand{\simplex}{\mathcal K}
\renewcommand{\iff}{if and only if}
\newcommand{\gaussian}{\ensuremath{\mathcal N}}
\newcommand{\bernoulli}{\ensuremath{\mathcal B}}
\newcommand{\twoval}{\ensuremath{\{0, 1\}}}
\newcommand{\fairvar}{\ensuremath{a}}
\newcommand{\predvar}{\ensuremath{\hat y}}
\newcommand{\truthvar}{\ensuremath{y}}
\newcommand{\cntvar}{\ensuremath{N}}
\newcommand{\posvar}{\ensuremath{M}}
\newcommand{\lmvar}{\ensuremath{\lambda}}
\newcommand{\secondlmvar}{\ensuremath{\mu}}
\newcommand{\rankvar}{\ensuremath{\rho}}
\newcommand{\fairset}{\ensuremath{\Phi}}
\newcommand{\fairepsvar}{\ensuremath{\epsilon}}
\newcommand{\perfepsvar}{\ensuremath{\delta}}
\newcommand{\fctvar}{\ensuremath{\mathbf z}}
\newcommand{\attrvar}{\ensuremath{\mathbf a}}
\newcommand{\data}{\ensuremath{\mathbf x}}
\newcommand{\baserate}{\ensuremath{\alpha}}
\newcommand{\perfvec}{\ensuremath{\mathbf c}}
\newcommand{\fairmat}{\ensuremath{\mathbf A}}
\newcommand{\fairqfmat}{\ensuremath{\mathbf B}}
\newcommand{\fairfn}{\ensuremath{\phi}}
\newcommand{\perffn}{\ensuremath{f}}
\newcommand{\marginalmat}{\ensuremath{\mathbf{A}_{\sf const}}}
\newcommand{\marginalvec}{\ensuremath{\mathbf{b}_{\sf const}}}
\newcommand{\name}{\textsc{FACT}}
\newcommand{\DPpic}{
\begin{tikzpicture}
\draw[help lines, step=4pt] (0,0) grid (8pt, 8pt);     % + confusion matrix (a=1)
\draw[help lines, step=4pt] (12pt,0) grid (20pt, 8pt); % - confusion matrix (a=0)
\fill[blue!100!white] ( 0pt,4pt) rectangle ( 4pt,8pt);  % TP+
\fill[blue!100!white] ( 4pt,4pt) rectangle ( 8pt,8pt);  % FP+
\fill[blue! 30!white] ( 0pt,0pt) rectangle ( 4pt,4pt);  % FN+
\fill[blue! 30!white] ( 4pt,0pt) rectangle ( 8pt,4pt);  % TN+
\fill[ red!100!white] (12pt,4pt) rectangle (16pt,8pt);  % TP-
\fill[ red!100!white] (16pt,4pt) rectangle (20pt,8pt);  % FP-
\fill[ red! 30!white] (12pt,0pt) rectangle (16pt,4pt);  % FN-
\fill[ red! 30!white] (16pt,0pt) rectangle (20pt,4pt);  % TN-
\end{tikzpicture}
}
\newcommand{\REOdPic}{
\begin{tikzpicture}
\draw[help lines, step=4pt] (0,0) grid (8pt, 8pt);     % + confusion matrix (a=1)
\draw[help lines, step=4pt] (12pt,0) grid (20pt, 8pt); % - confusion matrix (a=0)
\fill[blue! 30!white] ( 0pt,4pt) rectangle ( 4pt,8pt);  % TP+
\fill[blue!100!white] ( 4pt,4pt) rectangle ( 8pt,8pt);  % FP+
\fill[blue!100!white] ( 0pt,0pt) rectangle ( 4pt,4pt);  % FN+
\fill[blue! 30!white] ( 4pt,0pt) rectangle ( 8pt,4pt);  % TN+
\fill[ red!30!white] (12pt,4pt) rectangle (16pt,8pt);  % TP-
\fill[ red!100!white] (16pt,4pt) rectangle (20pt,8pt);  % FP-
\fill[ red!100!white] (12pt,0pt) rectangle (16pt,4pt);  % FN-
\fill[ red! 30!white] (16pt,0pt) rectangle (20pt,4pt);  % TN-
\end{tikzpicture}
}
\newcommand{\EOppic}{
\begin{tikzpicture}
\draw[help lines, step=4pt] (0,0) grid (8pt, 8pt);     % + confusion matrix (a=1)
\draw[help lines, step=4pt] (12pt,0) grid (20pt, 8pt); % - confusion matrix (a=0)
\fill[blue!100!white] ( 0pt,4pt) rectangle ( 4pt,8pt);  % TP+
% \fill[blue!100!white] ( 4pt,4pt) rectangle ( 8pt,8pt);  % FP+
\fill[blue! 30!white] ( 0pt,0pt) rectangle ( 4pt,4pt);  % FN+
% \fill[blue! 30!white] ( 4pt,0pt) rectangle ( 8pt,4pt);  % TN+
\fill[ red!100!white] (12pt,4pt) rectangle (16pt,8pt);  % TP-
% \fill[ red!100!white] (16pt,4pt) rectangle (20pt,8pt);  % FP-
\fill[ red! 30!white] (12pt,0pt) rectangle (16pt,4pt);  % FN-
% \fill[ red! 30!white] (16pt,0pt) rectangle (20pt,4pt);  % TN-
\end{tikzpicture}
}
\newcommand{\PEpic}{
\begin{tikzpicture}
\draw[help lines, step=4pt] (0,0) grid (8pt, 8pt);     % + confusion matrix (a=1)
\draw[help lines, step=4pt] (12pt,0) grid (20pt, 8pt); % - confusion matrix (a=0)
% \fill[blue!100!white] ( 0pt,4pt) rectangle ( 4pt,8pt);  % TP+
\fill[blue!100!white] ( 4pt,4pt) rectangle ( 8pt,8pt);  % FP+
% \fill[blue! 30!white] ( 0pt,0pt) rectangle ( 4pt,4pt);  % FN+
\fill[blue! 30!white] ( 4pt,0pt) rectangle ( 8pt,4pt);  % TN+
% \fill[ red!100!white] (12pt,4pt) rectangle (16pt,8pt);  % TP-
\fill[ red!100!white] (16pt,4pt) rectangle (20pt,8pt);  % FP-
% \fill[ red! 30!white] (12pt,0pt) rectangle (16pt,4pt);  % FN-
\fill[ red! 30!white] (16pt,0pt) rectangle (20pt,4pt);  % TN-
\end{tikzpicture}
}
\newcommand{\PPpic}{
$($\nolinebreak[4]
\begin{tikzpicture}
\draw[help lines, step=4pt] (0,0) grid (8pt, 8pt);     % + confusion matrix (a=1)
\draw[help lines, step=4pt] (12pt,0) grid (20pt, 8pt); % - confusion matrix (a=0)
\fill[blue!100!white] ( 0pt,4pt) rectangle ( 4pt,8pt);  % TP+
\fill[blue!100!white] ( 4pt,4pt) rectangle ( 8pt,8pt);  % FP+
% \fill[blue! 30!white] ( 0pt,0pt) rectangle ( 4pt,4pt);  % FN+
% \fill[blue! 30!white] ( 4pt,0pt) rectangle ( 8pt,4pt);  % TN+
\fill[ red!100!white] (12pt,4pt) rectangle (16pt,8pt);  % TP-
\fill[ red!100!white] (16pt,4pt) rectangle (20pt,8pt);  % FP-
% \fill[ red! 30!white] (12pt,0pt) rectangle (16pt,4pt);  % FN-
% \fill[ red! 30!white] (16pt,0pt) rectangle (20pt,4pt);  % TN-
\end{tikzpicture}
\nolinebreak[4]
$)^{(2)}$
}
\newcommand{\EFORpic}{
$($\nolinebreak[4]
\begin{tikzpicture}
\draw[help lines, step=4pt] (0,0) grid (8pt, 8pt);     % + confusion matrix (a=1)
\draw[help lines, step=4pt] (12pt,0) grid (20pt, 8pt); % - confusion matrix (a=0)
% \fill[blue!100!white] ( 0pt,4pt) rectangle ( 4pt,8pt);  % TP+
% \fill[blue! 30!white] ( 4pt,4pt) rectangle ( 8pt,8pt);  % FP+
\fill[blue!100!white] ( 0pt,0pt) rectangle ( 4pt,4pt);  % FN+
\fill[blue!100!white] ( 4pt,0pt) rectangle ( 8pt,4pt);  % TN+
% \fill[ red!100!white] (12pt,4pt) rectangle (16pt,8pt);  % TP-
% \fill[ red! 30!white] (16pt,4pt) rectangle (20pt,8pt);  % FP-
\fill[ red!100!white] (12pt,0pt) rectangle (16pt,4pt);  % FN-
\fill[ red!100!white] (16pt,0pt) rectangle (20pt,4pt);  % TN-
\end{tikzpicture}
\nolinebreak[4]
$)^{(2)}$
}
\newcommand{\EFNRpic}{
\begin{tikzpicture}
\draw[help lines, step=4pt] (0,0) grid (8pt, 8pt);     % + confusion matrix (a=1)
\draw[help lines, step=4pt] (12pt,0) grid (20pt, 8pt); % - confusion matrix (a=0)
\fill[blue!30!white] ( 0pt,4pt) rectangle ( 4pt,8pt);  % TP+
% \fill[blue!100!white] ( 4pt,4pt) rectangle ( 8pt,8pt);  % FP+
\fill[blue!100!white] ( 0pt,0pt) rectangle ( 4pt,4pt);  % FN+
% \fill[blue! 30!white] ( 4pt,0pt) rectangle ( 8pt,4pt);  % TN+
\fill[ red!30!white] (12pt,4pt) rectangle (16pt,8pt);  % TP-
% \fill[ red!100!white] (16pt,4pt) rectangle (20pt,8pt);  % FP-
\fill[ red! 100!white] (12pt,0pt) rectangle (16pt,4pt);  % FN-
% \fill[ red! 30!white] (16pt,0pt) rectangle (20pt,4pt);  % TN-
\end{tikzpicture}
% $($\nolinebreak[4]
% \begin{tikzpicture}
% \draw[help lines, step=4pt] (0,0) grid (8pt, 8pt);     % + confusion matrix (a=1)
% \draw[help lines, step=4pt] (12pt,0) grid (20pt, 8pt); % - confusion matrix (a=0)
% \fill[blue!100!white] ( 0pt,4pt) rectangle ( 4pt,8pt);  % TP+
% % \fill[blue! 30!white] ( 4pt,4pt) rectangle ( 8pt,8pt);  % FP+
% \fill[blue!100!white] ( 0pt,0pt) rectangle ( 4pt,4pt);  % FN+
% % \fill[blue!100!white] ( 4pt,0pt) rectangle ( 8pt,4pt);  % TN+
% \fill[ red!100!white] (12pt,4pt) rectangle (16pt,8pt);  % TP-
% % \fill[ red! 30!white] (16pt,4pt) rectangle (20pt,8pt);  % FP-
% \fill[ red!100!white] (12pt,0pt) rectangle (16pt,4pt);  % FN-
% % \fill[ red!100!white] (16pt,0pt) rectangle (20pt,4pt);  % TN-
% \end{tikzpicture}
% \nolinebreak[4]
% $)^{(2)}$
}
\newcommand{\CGpic}{
\begin{tikzpicture}
\draw[help lines, step=4pt] (0,0) grid (8pt, 8pt);     % + confusion matrix (a=1)
\draw[help lines, step=4pt] (12pt,0) grid (20pt, 8pt); % - confusion matrix (a=0)
\fill[blue!100!white] ( 0pt,4pt) rectangle ( 4pt,8pt);  % TP+
\fill[blue!100!white] ( 4pt,4pt) rectangle ( 8pt,8pt);  % FP+
% \fill[blue! 30!white] ( 0pt,0pt) rectangle ( 4pt,4pt);  % FN+
% \fill[blue! 30!white] ( 4pt,0pt) rectangle ( 8pt,4pt);  % TN+
% \fill[ red!100!white] (12pt,4pt) rectangle (16pt,8pt);  % TP-
% \fill[ red!100!white] (16pt,4pt) rectangle (20pt,8pt);  % FP-
% \fill[ red! 30!white] (12pt,0pt) rectangle (16pt,4pt);  % FN-
% \fill[ red! 30!white] (16pt,0pt) rectangle (20pt,4pt);  % TN-
\end{tikzpicture}
$\land$
\begin{tikzpicture}
\draw[help lines, step=4pt] (0,0) grid (8pt, 8pt);     % + confusion matrix (a=1)
\draw[help lines, step=4pt] (12pt,0) grid (20pt, 8pt); % - confusion matrix (a=0)
% \fill[blue!100!white] ( 0pt,4pt) rectangle ( 4pt,8pt);  % TP+
% \fill[blue!100!white] ( 4pt,4pt) rectangle ( 8pt,8pt);  % FP+
\fill[blue!100!white] ( 0pt,0pt) rectangle ( 4pt,4pt);  % FN+
\fill[blue!100!white] ( 4pt,0pt) rectangle ( 8pt,4pt);  % TN+
% \fill[ red!100!white] (12pt,4pt) rectangle (16pt,8pt);  % TP-
% \fill[ red!100!white] (16pt,4pt) rectangle (20pt,8pt);  % FP-
% \fill[ red! 30!white] (12pt,0pt) rectangle (16pt,4pt);  % FN-
% \fill[ red! 30!white] (16pt,0pt) rectangle (20pt,4pt);  % TN-
\end{tikzpicture}
$\land$
\begin{tikzpicture}
\draw[help lines, step=4pt] (0,0) grid (8pt, 8pt);     % + confusion matrix (a=1)
\draw[help lines, step=4pt] (12pt,0) grid (20pt, 8pt); % - confusion matrix (a=0)
% \fill[blue!100!white] ( 0pt,4pt) rectangle ( 4pt,8pt);  % TP+
% \fill[blue!100!white] ( 4pt,4pt) rectangle ( 8pt,8pt);  % FP+
% \fill[blue!100!white] ( 0pt,0pt) rectangle ( 4pt,4pt);  % FN+
% \fill[blue!100!white] ( 4pt,0pt) rectangle ( 8pt,4pt);  % TN+
\fill[ red!100!white] (12pt,4pt) rectangle (16pt,8pt);  % TP-
\fill[ red!100!white] (16pt,4pt) rectangle (20pt,8pt);  % FP-
% \fill[ red! 30!white] (12pt,0pt) rectangle (16pt,4pt);  % FN-
% \fill[ red! 30!white] (16pt,0pt) rectangle (20pt,4pt);  % TN-
\end{tikzpicture}
$\land$
\begin{tikzpicture}
\draw[help lines, step=4pt] (0,0) grid (8pt, 8pt);     % + confusion matrix (a=1)
\draw[help lines, step=4pt] (12pt,0) grid (20pt, 8pt); % - confusion matrix (a=0)
% \fill[blue!100!white] ( 0pt,4pt) rectangle ( 4pt,8pt);  % TP+
% \fill[blue!100!white] ( 4pt,4pt) rectangle ( 8pt,8pt);  % FP+
% \fill[blue!100!white] ( 0pt,0pt) rectangle ( 4pt,4pt);  % FN+
% \fill[blue!100!white] ( 4pt,0pt) rectangle ( 8pt,4pt);  % TN+
% \fill[ red!100!white] (12pt,4pt) rectangle (16pt,8pt);  % TP-
% \fill[ red!100!white] (16pt,4pt) rectangle (20pt,8pt);  % FP-
\fill[ red!100!white] (12pt,0pt) rectangle (16pt,4pt);  % FN-
\fill[ red!100!white] (16pt,0pt) rectangle (20pt,4pt);  % TN-
\end{tikzpicture}
}
\newcommand{\CGmat}{
\begin{pmatrix}
1-v_1 & 0     & -v_1 &    0 & 0     & 0     &    0 & 0\\
0     & 1-v_0 &    0 & -v_0 & 0     & 0     &    0 & 0\\
0     & 0     &    0 &    0 & 1-v_1 & 0     & -v_1 & 0\\
0     & 0     &    0 &    0 & 0     & 1-v_0 &    0 & -v_0
\end{pmatrix}
}
\newcommand{\PCBpic}{
\begin{tikzpicture}
\draw[help lines, step=4pt] (0,0) grid (8pt, 8pt);     % + confusion matrix (a=1)
\draw[help lines, step=4pt] (12pt,0) grid (20pt, 8pt); % - confusion matrix (a=0)
\fill[blue!100!white] ( 0pt,4pt) rectangle ( 4pt,8pt);  % TP+
% \fill[blue!100!white] ( 4pt,4pt) rectangle ( 8pt,8pt);  % FP+
\fill[blue! 30!white] ( 0pt,0pt) rectangle ( 4pt,4pt);  % FN+
% \fill[blue! 30!white] ( 4pt,0pt) rectangle ( 8pt,4pt);  % TN+
\fill[ red!100!white] (12pt,4pt) rectangle (16pt,8pt);  % TP-
% \fill[ red!100!white] (16pt,4pt) rectangle (20pt,8pt);  % FP-
\fill[ red! 30!white] (12pt,0pt) rectangle (16pt,4pt);  % FN-
% \fill[ red! 30!white] (16pt,0pt) rectangle (20pt,4pt);  % TN-
\end{tikzpicture}
}
\newcommand{\NCBpic}{
\begin{tikzpicture}
\draw[help lines, step=4pt] (0,0) grid (8pt, 8pt);     % + confusion matrix (a=1)
\draw[help lines, step=4pt] (12pt,0) grid (20pt, 8pt); % - confusion matrix (a=0)
% \fill[blue!100!white] ( 0pt,4pt) rectangle ( 4pt,8pt);  % TP+
\fill[blue!100!white] ( 4pt,4pt) rectangle ( 8pt,8pt);  % FP+
% \fill[blue! 30!white] ( 0pt,0pt) rectangle ( 4pt,4pt);  % FN+
\fill[blue! 30!white] ( 4pt,0pt) rectangle ( 8pt,4pt);  % TN+
% \fill[ red!100!white] (12pt,4pt) rectangle (16pt,8pt);  % TP-
\fill[ red!100!white] (16pt,4pt) rectangle (20pt,8pt);  % FP-
% \fill[ red! 30!white] (12pt,0pt) rectangle (16pt,4pt);  % FN-
\fill[ red! 30!white] (16pt,0pt) rectangle (20pt,4pt);  % TN-
\end{tikzpicture}
}
\icmltitlerunning{FACT: A Diagnostic for Group Fairness Trade-offs}
\begin{document}

\twocolumn[
\icmltitle{FACT: A Diagnostic for Group Fairness Trade-offs}

% It is OKAY to include author information, even for blind
% submissions: the style file will automatically remove it for you
% unless you've provided the [accepted] option to the icml2020
% package.

% List of affiliations: The first argument should be a (short)
% identifier you will use later to specify author affiliations
% Academic affiliations should list Department, University, City, Region, Country
% Industry affiliations should list Company, City, Region, Country

% You can specify symbols, otherwise they are numbered in order.
% Ideally, you should not use this facility. Affiliations will be numbered
% in order of appearance and this is the preferred way.
%\icmlsetsymbol{equal}{*}

\begin{icmlauthorlist}
\icmlauthor{Joon Sik Kim}{cmu,intern}
\icmlauthor{Jiahao Chen}{jpm}
\icmlauthor{Ameet Talwalkar}{cmu,determined}
\end{icmlauthorlist}

\icmlaffiliation{cmu}{Machine Learning Department, Carnegie Mellon University, Pittsburgh, USA}
\icmlaffiliation{intern}{Work paritally done during an internship at JP Morgan}
\icmlaffiliation{jpm}{JP Morgan AI Research, New York, USA}
\icmlaffiliation{determined}{Determined AI, San Francisco, USA}

\icmlcorrespondingauthor{Joon Sik Kim}{joonkim@cmu.edu}

% You may provide any keywords that you
% find helpful for describing your paper; these are used to populate
% the "keywords" metadata in the PDF but will not be shown in the document
\icmlkeywords{Machine Learning, ICML}

\vskip 0.3in
]

% this must go after the closing bracket ] following \twocolumn[ ...

% This command actually creates the footnote in the first column
% listing the affiliations and the copyright notice.
% The command takes one argument, which is text to display at the start of the footnote.
% The \icmlEqualContribution command is standard text for equal contribution.
% Remove it (just {}) if you do not need this facility.

\printAffiliationsAndNotice{}  % leave blank if no need to mention equal contribution
%\printAffiliationsAndNotice{\icmlEqualContribution} % otherwise use the standard text.

\begin{abstract}
Group fairness, a class of fairness notions that measure how different groups of individuals are treated differently according to their protected attributes, has been shown to conflict with one another, often with a necessary cost in loss of model's predictive performance.
We propose a general diagnostic that enables systematic characterization of these trade-offs in group fairness. 
We observe that the majority of group fairness notions can be expressed via the fairness--confusion tensor, which is the confusion matrix split according to the protected attribute values. We frame several optimization problems that directly optimize both accuracy and fairness objectives over the elements of this tensor, which yield a general perspective for understanding multiple trade-offs including group fairness incompatibilities.
It also suggests an alternate post-processing method for designing fair classifiers.
On synthetic and real datasets, we demonstrate the use cases of our diagnostic, particularly on understanding the trade-off landscape between accuracy and fairness. 
\end{abstract}

\section{Introduction}

As machine learning continues to be more widely used for applications with societal impact such as credit decisioning, predictive policing, and employment applicant screening, practitioners face regulatory, ethical, and legal challenges to prove whether or not their models are fair~\cite{ainow2019}.
To provide quantitative tests of model fairness, the practitioners further need to choose between multiple definitions of fairness that exist in the machine learning literature~\cite{calders2009indep, zliobaite2015relation,narayanan2018translation}. Among them is a class of definitions called \emph{group fairness}, which measures how a group of individuals with certain protected attributes are treated differently from other groups. This notion is widely studied as a concept of \emph{disparate impact} in the legal context, and one specific instance of this notion was enforced as a law for fair employment process back in 1978~\cite{biddle2006adverse}. From a technical point of view however, several notions of group fairness have been shown to conflict with one another~\cite{kleinberg2016inherent, chouldechova2017fair}, sometimes with a necessary cost in loss of accuracy~\cite{liu2019cg}. 
Such considerations complicate the practical development and assessment of machine learning models designed to satisfy group fairness, as the conditions under which these trade-offs must necessarily occur can be too abstract to understand. Previous works on these trade-offs have been presented in ad hoc and definition-specific manner, which further calls for a more general perspective addressing the trade-offs in practice. 

As an example, suppose an engineer is responsible for training a loan prediction model from a large user dataset,
subject to mandatory group fairness requirements shaped by regulatory concerns. 
She has many choices for how to train this fair model,
with fairness enforced before~\cite{kamiran2010discrimination, zemel2013learning, madras2018learning, samadi2018price, song2019learning, tan2019learning}, during~\cite{zafar2015fairness, zafar2017fairness}, or after~\cite{dwork2012fairness, feldman2015di, hardt2016equality} training.
However, she must resort to trial and error to determine which of these myriad approaches, if any, will produce a compliant model with sufficient
performance\footnote{In this work, \emph{performance} refers to classical metrics derived from the confusion matrix, e.g., accuracy, precision and fairness notions are not part of it.}
to satisfy business needs.
It may even turn out that despite her best efforts,
the fairness constraints set by the regulators are actually impossible to satisfy to begin with,
due to limitations intrinsic to the prediction task and data at hand. If there was a tool to understand the potential trade-offs exhibited by the model, even before training, it would be easier for multiple parties to effectively reconcile the conflicting components in designing fair classifiers. 

Motivated by such practical considerations, we propose the \emph{FACT (\textbf{FA}irness-\textbf{C}onfusion \textbf{T}ensor) diagnostic}
for exploring the trade-offs involving group fairness: the diagnostic provides a general framework under which the practitioners can understand both fairness--fairness trade-offs and fairness--performance trade-offs. At the core of our diagnostic lies the \emph{fairness--confusion tensor}, which is the confusion matrix divided along an additional axis for protected attributes. The FACT diagnostic first expresses the majority of group fairness notions as linear/quadratic functions of the elements of this tensor. The simplicity of these functions makes it easy for them to be naturally integrated into a class of optimization problems over the elements of the tensor (not over the model parameters), which we call \emph{\PFOP{}} (PFOP). It essentially considers the geometry of valid \fct{}s that satisfy a specified set of performance and/or fairness conditions. 

By noting that many settings involve only linear notions of fairness, in this work we focus on \emph{\LAFOP{}} (LAFOP) and \emph{model-specific \LAFOP{}} (MS-LAFOP), which are specific instantiations of PFOP, each representative of model-agnostic and model-specific scenarios. In particular, for the model-agnostic case, the diagnostic allows for a comparative analysis of the \emph{relative} difficulty of learning a classifier under additional group fairness constraints imposed. This difficulty is interpreted with respect to the Bayes error, which is the inherent difficulty of the fairness-unconstrained learning problem, hence a natural reference point.

% At the core of the diagnostic lies the \emph{fairness--confusion tensor}, which is the confusion matrix divided along an additional axis for protected attributes. Many metrics for model performance and group fairness are simple rational functions of the elements of this tensor. 
% With these observations, our diagnostic solves an optimization problem called \emph{\PFOP{}} (PFOP) defined over the elements of this tensor (not over model parameters), which essentially considers the geometry of valid \fct{}s that satisfy a specified set of performance and/or fairness conditions. 
% By noting that many settings involve only linear notions of fairness, we focus on the \emph{\LAFOP{}} (LAFOP) in this work, a specific instantiation of PFOP.

Our contributions are:
\begin{enumerate}
\item to demonstrate how \fct{} characterizes the majority of group fairness definitions in the literature as linear or quadratic functions, whose simplicity can be leveraged to formulate optimization problems suited for trade-off analysis,

\item to formulate the \name\ diagnostic as a PFOP, LAFOP, and MS-LAFOP over the \fct{}, enabling both model-agnostic and model-specific analysis of fairness trade-offs,

\item to provide a general understanding of group fairness incompatibility, which simplifies the existing results in the literature and extends them to new types,

\item to demonstrate the use of the FACT diagnostic on synthetic and real datasets, e.g. how it can be used for diagnosis of relative influence of the fairness notions on performance and other fairness conditions, and how it can be used as a post-processing method for designing fair classifiers.
\end{enumerate}

\section{Related Work}%
\label{sec:relatedwork}

\textbf{Fairness--confusion tensor} is not a completely new notion -- several work has implicitly mentioned it, mostly disregarding it as a simple computational tool that eases the computation on an implementation level~\cite{bellamy2018ai, celis2019classification}. It is also a natural object considered in several post-processing methods in fairness~\cite{hardt2016equality, pleiss2017fairness}, a group of algorithms that fine-tune a trained model to mitigate the unfairness while keeping the performance change minimal. Here we take a closer look at the \fct{} itself and study how this object naturally brings together several notions of group fairness, simplifying and generalizing the analysis of inherent trade-offs within. 

\textbf{Quantitative definitions of group fairness} exist in many different variations~\cite{narayanan2018translation,  kleinberg2016inherent, chouldechova2017fair, dwork2012fairness, hardt2016equality,  calders2010dp,  berk2018fairness} but few work exists to  categorize these notions with a broader perspective encompassing the trade-off schemes. 
\citet{verma2018fairness} categorized the existing group fairness definitions based on entries and rates derived from the fairness--confusion tensor but did not explore any trade-offs and incompatibilities within.
Our work extends this effort and provides a versatile geometric formalism to study the trade-offs.

\textbf{Fairness--performance trade-offs} have been studied in many specific cases~\cite{calders2009indep,  zliobaite2015relation,kamiran2010discrimination, feldman2015di, menon2018cost, liu2019cg, zhao2019inherent},
for limited definitions of fairness, performance, and models. To our knowledge, these trade-offs have not been studied in the general way we present below.
\citet{zafar2015fairness, zafar2017fairness} presented an optimization-based analysis of the trade-offs,
albeit over the parameter space of a particular model.

\textbf{Fairness--fairness trade-offs} describe the incompatibility of multiple notions of group fairness~\cite{kleinberg2016inherent,chouldechova2017fair,pleiss2017fairness,berk2018fairness} without some strong assumptions about the data and the model. Previous incompatibility results have been presented mostly in ad hoc and definition-specific manner, which our diagnostic addresses with a more general perspective for understanding incompatibilities. We show a general incompatibility result involving Calibration fairness condition, which naturally implies the result in \citet{kleinberg2016inherent} along with many other new ones.  
%We show a more general form of the theorem in \citet{kleinberg2016inherent}, which yields other related and new incompatibility results. Using the same framework, we also re-derive the results in \citet{chouldechova2017fair}, with a more detailed set of conditions necessary for compatibility.
To the best of our knowledge,
our work is the first to provide a systematic approach to diagnose both fairness--fairness and fairness--performance trade-offs together for group fairness under the same formalism.

\setlength\arraycolsep{2pt}

\section{The Fairness--confusion Tensor}
\label{sec:linear}

\begin{table*}[!ht]
\small
\resizebox{\textwidth}{!}{
\begin{threeparttable}[t]
    \centering
    \begin{tabular}{l l l}
Name of fairness & Definition and linear system & Terms in \fct \\
\toprule
Demographic parity (DP) &
$\Prob(\hat{y} = 1 | \attrvar = 1) = \Prob(\hat{y} = 1  | \attrvar = 0)$
\\ &
$\fairmat_\textsc{dp} = \frac 1 \cntvar \begin{pmatrix}
\cntvar_0 & 0 & \cntvar_0 & 0 & -\cntvar_1 & 0 & -\cntvar_1 & 0
\end{pmatrix}$ &
\DPpic
\\
Equality of opportunity (EOp)\cite{hardt2016equality} &
$\Prob(\hat{y} = 1 | y=1, \attrvar = 1) = \Prob(\hat{y} = 1| y = 1 , \attrvar = 0)$ &
\\ &
$\fairmat_\textsc{eop} = \frac 1 \cntvar \begin{pmatrix}
\posvar_0 & 0 & 0 & 0 & -\posvar_1 & 0 & 0 & 0
\end{pmatrix}
$ &
\EOppic
\\

Predictive equality (PE)\cite{chouldechova2017fair} &
$\Prob(\hat{y} = 1 | y=0 , \attrvar = 1) = \Prob(\hat{y} = 1 |y=0 , \attrvar = 0)$ &
\\ &
$\fairmat_\textsc{pe} = \frac 1 \cntvar \begin{pmatrix}
0 & 0 & \cntvar_0 - \posvar_0 & 0 & 0& 0 & -\cntvar_1 + \posvar_1&0
\end{pmatrix}$ &
\PEpic
\\

Equalized odds (EOd)\cite{hardt2016equality} &
EOp $\land$ PE &
\EOppic $\land$ \PEpic
\\

% Equal false positive rate (EFPR) \tnote{2} &
% $\Prob(\predvar = 1 | \truthvar = 0, \attrvar = 1) = \Pr (\predvar = 1 | \truthvar = 0, \attrvar = 0)$ &
% \\ &
% \fairmat_{\textsc{efpr}} = \frac{1}{\cntvar} \begin{pmatrix}
% \end{pmatrix} &
% \EFPRpic
% \\

Equal false negative rate (EFNR) \tnote{2} &
$\Prob(\predvar = 0 | \truthvar = 1, \attrvar = 1) = \Pr (\predvar = 0 | \truthvar = 1, \attrvar = 0)$ & 
\\ &
$\fairmat_\textsc{efnr} = \frac{1}{\cntvar} \begin{pmatrix}
0 & M_0 & 0 & 0 & 0 &-M_1 & 0 &0
\end{pmatrix}$ &
\EFNRpic
\\

Calibration within groups (CG)\cite{kleinberg2016inherent} &
$\Prob(y=1 | P_\theta(\mathbf{x}) = s, \attrvar = 1) = \Prob(y=1 | P_\theta(\mathbf{x}) = s, \attrvar = 0) = s$ &
\\ &
$\fairmat_\textsc{cg} = \CGmat$ &
\CGpic
\\

Positive class balance (PCB)\cite{kleinberg2016inherent} &
$\mathbb{E}(P_\theta | y=1 , \attrvar = 1) = \mathbb{E}(P_\theta | y=1 , \attrvar = 0)$ &
\\ &
$\fairmat_\textsc{pcb} =
\min_\fairvar(\posvar_\fairvar)
\begin{pmatrix}
\frac{v_1}{\posvar_1} & \frac{v_0}{\posvar_1} & 0 & 0 & -\frac{v_1}{\posvar_0} & -\frac{v_0}{\posvar_0} & 0 & 0 \end{pmatrix} $ &
\PCBpic
\\

Negative class balance (NCB)\cite{kleinberg2016inherent} &
$\mathbb{E}(P_\theta | y=0 , \attrvar = 1) = \mathbb{E}(P_\theta | y=0 , \attrvar = 0)$ &
\\ &
$\fairmat_\textsc{ncb} = \min_\fairvar(\cntvar_\fairvar - \posvar_\fairvar) \begin{pmatrix}
0 & 0 & \frac{v_1}{\cntvar_1 - \posvar_1} & \frac{v_0}{\cntvar_1 - \posvar_1} & 0 & 0 & -\frac{v_1}{\cntvar_0 - \posvar_0} & -\frac{v_0}{\cntvar_0 - \posvar_0}
\end{pmatrix}$ &
\NCBpic
\\

Relaxed Equalized Odds (REod)\cite{pleiss2017fairness} & 
$\alpha_0 FPR_0 + \beta_0 FNR_0 = \alpha_1 FPR_1 + \beta_1 FNR_1$ & 
\\ & 
$\fairmat_\textsc{REOd} = \begin{pmatrix}
0 & \frac{\beta_1}{\posvar_1} & \frac{\alpha_1}{\cntvar_1 - \posvar_1} & 0 & 0 & \frac{\beta_0}{\posvar_0} & \frac{\alpha_0}{\cntvar_0 - \posvar_0} & 0
\end{pmatrix} / N$
& \REOdPic
\\

\midrule

Predictive parity (PP)\cite{chouldechova2017fair} &
$\Prob(y = 1 | \hat{y}=1 , \attrvar = 1) = \Prob(y = 1 | \hat{y}=1 , \attrvar = 0)$ &
\\ &
$\frac 1 2 \fctvar^T \fairqfmat_\textsc{pp} \fctvar = (TP_1 FP_0 - TP_0 FP_1)/N^2$ &
\PPpic
\\

Equal false omission rate (EFOR) \tnote{1} &
$\Prob(y = 1 | \hat{y}=0 , \attrvar = 1) = \Prob(y = 1 | \hat{y}=0 , \attrvar = 0)$ &
\\ &
$\frac 1 2 \fctvar^T \fairqfmat_\textsc{efor} \fctvar = (TN_1 FN_0 - TN_0 FN_1)/N^2$ &
\EFORpic
\\

Conditional accuracy equality (CA)\cite{berk2018fairness} &
PP $\land$ EFOR &
\PPpic $\land$ \EFORpic
\\

\bottomrule
\end{tabular}
\begin{tablenotes}
\item[1]To our knowledge, EFOR has not been described in literature in isolation, but is used in the definition of conditional accuracy equality (CA)\cite{berk2018fairness}.
\item[2]Defined implicitly in \cite{chouldechova2017fair}.
\end{tablenotes}
\end{threeparttable}
}
\caption{
Some common group fairness definitions and corresponding abbreviations used throughout the paper in terms of linear functions $\fairfn(\fctvar) = \fairmat \fctvar$
or quadratic functions $\fairfn(\fctvar) = \frac 1 2 \fctvar^T \fairqfmat \fctvar$ that appear in the \PFOP{} \eqref{eqn:PFOP}.
There are two groups separated by the horizontal line:
those that are specified by linear functions (above),
or quadratic functions (below).
The graphical notation is described in \Cref{sec:linear}.
$P_\theta$ is the probability produced by a model (parameterized by $\theta$) of $\hat y =1$.
The fairness functions $\fairfn$ are uniquely defined only up to a normalization factor and overall sign.
}%
\label{tab:def_fairness}
\end{table*}

Our key insight is that the elements of the \fct{} encode all the information needed to study many notions of performance and group fairness.
The \fct{} is simply the stack of confusion matrices for each protected attribute $\fairvar$, as shown in \Cref{tab:fct}.
We focus on the simplest case, with one binary protected attribute $\fairvar\in\twoval$, and a binary classifier $\predvar\in\twoval$
for a binary prediction label $\truthvar\in\twoval$.\footnote{The arguments generalize to multiple and non-binary protected attributes with high-dimensional tensors.}

\begin{table}[!ht]
\small
    \centering
    \begin{tabular}{|c||c|c|}
    \toprule
    $a=1$ & $y=1$ & $y=0$\tabularnewline
    \midrule
    $\hat{y}=1$ &TP$_{1}$ & FP$_{1}$\tabularnewline
    \midrule
    \midrule
    $\hat{y}=0$ &FN$_{1}$ & TN$_{1}$\tabularnewline
    \bottomrule
    \end{tabular} \quad
    \begin{tabular}{|c||c|c|}
    \toprule
    $a=0$ & $y=1$ & $y=0$\tabularnewline
    \midrule
    $\hat{y}=1$ & TP$_{0}$ & FP$_{0}$\tabularnewline
    \midrule
    \midrule
    $\hat{y}=0$ & FN$_{0}$ & TN$_{0}$\tabularnewline
    \bottomrule
    \end{tabular}
    \caption{The \fct{},
    showing the two planes corresponding to the confusion matrix for each of the favored ($\fairvar = 1$)
    and disfavored groups ($\fairvar = 0$).}%
    \label{tab:fct}
\end{table}

Let us denote the elements of the fairness--confusion tensor as $TP_a, FP_a, FN_a, TN_a$, each element with subscripts indicating $\fairvar$, $\cntvar$ be the number of data points,
$\cntvar_\fairvar = TP_\fairvar + FN_\fairvar + FP_\fairvar + TN_\fairvar$
be the number of data points in each group $\fairvar \in \twoval$, and
$\posvar_\fairvar = TP_\fairvar + FN_\fairvar$
be the number of positive-class instances ($\truthvar = 1$) for each group.
Assume $\cntvar$, $\cntvar_\fairvar$ and $\posvar_\fairvar$ are known constants.
Unraveling the \fct{} into an 8-dimensional vector, we write it as
\begin{equation*}
\fctvar = {(TP_1, F\cntvar_1, FP_1, T\cntvar_1, TP_0, F\cntvar_0, FP_0, T\cntvar_0)}^T/N,
\end{equation*}
normalized and constrained to lie on $\simplex = \{\fctvar \ge 0 : \marginalmat \fctvar\ = \marginalvec,  \|\fctvar\|_1 = 1\}$, where $\marginalmat$ and $\marginalvec$ encode marginal sum constraints of the dataset (e.g., $TP_a + FN_a = M_a$) in matrix notations:
\begin{align*}
\small
\marginalmat  &= \begin{pmatrix}
    1 & 1 & 1 & 1 & 0 & 0 & 0 & 0 \\
    1 & 1 & 0 & 0 & 0 & 0 & 0 & 0 \\
    0 & 0 & 0 & 0 & 1 & 1 & 1 & 1 \\
    0 & 0 & 0 & 0 & 1 & 1 & 0 & 0
    \end{pmatrix}, \\ 
\marginalvec &= {(\cntvar_1, \posvar_1, \cntvar_0, \posvar_0)}^T / N. 
\end{align*}

We show below that some typical notions of group fairness can be reformulated as simple functions of $\fctvar$, namely as a form of $\phi(\fctvar) = 0$. 

\textbf{Demographic parity (DP)} states that each protected group should receive positive prediction at an equal rate: $\Prob(\hat{y} = 1 | \attrvar = 1) = \Prob(\hat{y} = 1  | \attrvar = 0)$, which is equivalent to $(TP_1 + FP_1)/{\cntvar_1} = (TP_0 + FP_0)/{\cntvar_0}$, or also the linear system $\phi(\fctvar) = \fairmat_\textsc{dp} \fctvar = 0$,
where
\begin{equation}
\fairmat_\textsc{dp} = \begin{pmatrix}
\cntvar_0 & 0 & \cntvar_0 & 0 & -\cntvar_1 & 0 & -\cntvar_1 & 0
\end{pmatrix} / \cntvar.
\label{eqn:dp}
\end{equation}
The choice of normalization, $1/N$, ensures that the matrix coefficients are in $[0, 1]$. We will refer to these matrices $\fairmat$ that encode information about the fairness conditions as fairness matrices. 

\textbf{Predictive parity (PP)} \cite{chouldechova2017fair}
states that the likelihood of being in the positive class given the positive prediction is the same for each group: $\Prob(y = 1 | \hat{y}=1 , \attrvar = 1) = \Prob(y = 1 | \hat{y}=1 , \attrvar = 0)$, which is equivalent to
$\frac{TP_1}{TP_1 + FP_1} =
    \frac{TP_0}{TP_0 + FP_0} \Longleftrightarrow \frac{TP_1}{TP_0} =
    \frac{FP_1}{FP_0}$. Unlike for DP, the marginal sum constraints do not relate $TP_a$ and $FP_a$, so this notion of fairness is \textit{not} linear in the \fct.
PP actually can be expressed using a \emph{quadratic} form:
\begin{equation}
\small
    \phi(\fctvar) = \frac 1 2 \fctvar^T \fairqfmat_\textsc{PP} \fctvar = 0,\quad
    \fairqfmat_\textsc{PP} = \begin{pmatrix}
    0 & 0 & 0 & 0 & 0 & 0 & -1 & 0 \\
    0 & 0 & 0 & 0 & 0 & 0 & 0 & 0 \\
    0 & 0 & 0 & 0 & 1 & 0 & 0 & 0 \\
    0 & 0 & 0 & 0 & 0 & 0 & 0 & 0 \\
    0 & 0 & 1 & 0 & 0 & 0 & 0 & 0 \\
    0 & 0 & 0 & 0 & 0 & 0 & 0 & 0 \\
    -1 & 0 & 0 & 0 & 0 & 0 & 0 & 0 \\
    0 & 0 & 0 & 0 & 0 & 0 & 0 & 0
    \end{pmatrix}.
\end{equation}

\textbf{Calibration within groups (CG)}~\cite{kleinberg2016inherent},
when specialized to binary classifiers and binary protected classes,
can be written as the system of equations $FN_a = v_0 (FN_a + TN_a); TP_a  = v_1 (TP_a + FP_a)$, 
where the $v_i$s are scores satisfying $0 \le v_0 < v_1 \le 1$
and have no implicit dependence on any entries of the \fct.
We can rewrite this this condition explicitly as the matrix equation
$\phi(\fctvar) = \fairmat_{\textsc{cg}} \fctvar = 0$ with a fairness matrix
\begin{equation}
\label{eqn:cg}
\small
\fairmat_{\textsc{cg}} = \CGmat.
\end{equation}

\textbf{Equalized odds (EOd)} \cite{hardt2016equality} states that true-positive rates and false-positive rates are the same for both groups, which can be expressed as a linear system $\phi(\fctvar) = \fairmat_{\textsc{EOd}} \fctvar = 0$ with a fairness matrix
\begin{equation}
\fairmat_{\textsc{EOd}} = \frac{1}{N} \begin{pmatrix}
\posvar_0 & 0 & 0 & 0 & -\posvar_1 & 0 & 0 & 0 \\
0 & 0 & \cntvar_0 - \posvar_0 & 0 & 0& 0 & -\cntvar_1 + \posvar_1&0
\end{pmatrix}
\end{equation}
where each row respectively corresponds to conditions for Equality of Opportunity (EOp)~\cite{hardt2016equality} and Predictive Equality (PE)~\cite{chouldechova2017fair}.
Likewise, vertically stacking multiple fairness matrices results in a fairness matrix corresponding to the conjunction of different fairness notions.

In \Cref{tab:def_fairness} we generalize this formulation to a wide majority of group fairness definitions in the literature, along with their abbreviations used throughout the paper. We find that most of the definitions take either linear or quadratic form with respect to $\fctvar$. We further introduce a graphical notation to help visualize which components of the \fct{}
participate in the fairness definition.
Depict the \fct{} as
\begin{tikzpicture}
\draw[help lines, step=4pt] (0,0) grid (8pt, 8pt);     % + confusion matrix (a=1)
\draw[help lines, step=4pt] (12pt,0) grid (20pt, 8pt); % - confusion matrix (a=0)
\end{tikzpicture}
,
with the left matrix for the favored class ($\fairvar = 1$)
and the right matrix for the disfavored class ($\fairvar = 0$).
Since each component of $\fctvar$ corresponds to some element of the \fct,
we shade each component that appears in the equation.
Blue shading denotes the favored class,
while red shading denotes the disfavored class.
We further distinguish two kinds of dependencies.
Components that have a nonzero coefficient in the matrix are shaded fully.
However, the values of these coefficients themselves can depend on other components,
albeit implicitly, and we shade these implicit components in a lighter shade. Putting this all together,
we can represent DP in \eqref{eqn:dp} graphically as
\DPpic, EOd as \EOppic$\land$\PEpic, PP as \PPpic, with the superscript denoting the quadratic order of the term. As shown in the third column of \Cref{tab:def_fairness}, all group fairness notions can be effectively described in this notation.

%%%%%%%%%%%%%%%%%%%%%%%%%%%%%%%%%%%%%%%%%%%%%%%%%%%%%%%%%%%%%%%%%%%%%%%%%%%%%%%%%%

\section{Optimization over the Fairness--confusion Tensor}%
\label{sec:optimization}

The fairness--confusion tensor $\fctvar$ allows for a succinct linear and quadratic characterization of group fairness definitions in the literature. We naturally consider the following family of optimization problems over $\fctvar \in \simplex$, where the objective function is constructed so that the solution reflects trade-offs between fairness and performance. 

\begin{definition}
Let $\perffn^{(i)} : \simplex \rightarrow [0, 1]$ be performance metrics (indexed by $i$) with best performance 0 and worst performance 1,
$\fairfn^{(j)}(\fctvar)$ be fairness functions (indexed by $j$) with
$\secondlmvar_i$, $\lmvar_j$ be real constants with $\secondlmvar_0 = 1$. 
Then, the \emph{\PFOP{} (PFOP)} is a class of optimization problem of form:
\begin{equation}
\argmin_{\fctvar \in \simplex}
\sum_{i\ge0} \secondlmvar_i \perffn^{(i)}(\fctvar)
+ \sum_{j\ge0} \lmvar_j \fairfn^{(j)}(\fctvar)
\label{eqn:PFOP}
\end{equation}
\end{definition}

PFOP is a general optimization problem containing two groups of terms;
the first quantifying performance loss;
the second quantifying unfairness.
The restriction $\fctvar\in\simplex$ is necessary to ensure that $\fctvar$ is a valid \fct{} that obeys the requisite marginal sums. In our discussion below, it will be convenient to consider solutions with explicit bounds on their optimality.

\begin{definition}%
\label{def:approxsol}
Let $\fairepsvar \ge 0 $ and $\perfepsvar \ge 0$.
Then, a \emph{$(\fairepsvar, \perfepsvar)$-solution to the PFOP}
is a $\fctvar$ that satisfies \eqref{eqn:PFOP}
such that $\sum_j \lambda_j \fairfn^{(j)}(\fctvar) \le \fairepsvar$
and $\sum_i \mu_i \perffn^{(i)}(\fctvar) \le \perfepsvar$.
\end{definition}

The parameters $\fairepsvar$ and $\perfepsvar$ represent the sum total of deviation from perfect fairness and perfect predictive performance respectively. Unless otherwise stated, the rest of the paper is dedicated to analyzing one of the simplest instantiations of PFOP, defined below.

\begin{definition}%[Least-squares accuracy-fairness optimality problem (LAFOP)]
The \emph{\LAFOP{} (LAFOP)} is a PFOP with accuracy (or classification error rate) as the performance function $\perffn^{(0)}$, and $K\ge1$ fairness constraints in the form of a fairness matrix $\fairmat$ (each row indexed by $j$), with
\begin{equation}
\begin{aligned}
    %\fairfn(\fctvar) &= \|{(\fairfn^{(0)}(\fctvar), \dots, \fairfn^{(K-1)}(\fctvar))}^T\|_2^2 = \| \fairmat \fctvar\|_2^2, \\
    \fairfn^{(j)}(\fctvar) &= (\fairmat_{j,*} \fctvar)^2, \quad j = 0, ..., K-1\\
    \perffn^{(0)}(\fctvar) &= (\perfvec \cdot \fctvar)^2, \\
    \perfvec &= {(0, 1, 1, 0, 0, 1, 1, 0)}^T, \\
    \lmvar &= \lmvar_0 = ... = \lmvar_{K-1}.
\end{aligned}
\end{equation}
In other words, LAFOP is the problem
\begin{equation}
\argmin_{\fctvar \in \simplex}
{(\perfvec \cdot \fctvar)}^2
+ \lmvar \Vert \fairmat \fctvar \Vert^2_2,
\label{eqn:LAFOP}
\end{equation}
\end{definition}
where $\perfvec \cdot \fctvar$  encodes the usual notion of classification error, and $\fairmat$ encodes $K$ linear fairness functions stacked together as the regularizer.
A single hyperparameter $\lmvar$ specifies the relative importance of satisfying the fairness constraints while optimizing classification performance, with $\lmvar = 0$ considering only performance and disabling all fairness constraints, and $\lmvar = \infty$ imposing fairness constraints without regard to accuracy.

LAFOP is a convex optimization problem which is simple to analyze.
Despite its simplicity, LAFOP encompasses many situations involving linear notions of fairness,
allowing us to reason about multiple fairness constraints as well as fairness--accuracy trade-offs under versatile scenarios.

\subsection{Reduction to a post-processing method for fair classification}

PFOP and LAFOP do not assume anything about the model, therefore are designed to be model-agnostic. In this section we highlight the versatility of LAFOP by showing that adding a model-specific constraint on LAFOP reduces it to a post-processing algorithm for fair classification.

Post-processing method, in particular for EOd as introduced in \citet{hardt2016equality}, solves the following optimization problem for $\tilde{Y}$, which is a post-processed, supposedly fair, classifier, given $\hat{Y}$, a vanilla classifier:
\begin{multline}
    \min_{\tilde{Y}} \mathbb{E}l(\tilde{Y}, Y) \text{ such that } \gamma_0(\tilde{Y}) = \gamma_1(\tilde{Y}) \\ 
    \text{ and } \gamma_0(\tilde{Y}) \in P_0(\hat{Y}), \gamma_1(\tilde{Y}) \in P_1(\hat{Y})
    \label{eqn:hardt}
\end{multline} where $\gamma_a(\tilde{Y})$ represents EOd constraints for $\tilde{Y}$ as a tuple of ($FPR_a$, $TPR_a$), and $P_a(\hat{Y})$ is a model-specific set of feasible $\gamma_a$ values, defined as $P_a(\hat{Y}) = \text{convhull}\{(0,0), \gamma_a(\hat{Y}), \gamma_a(1 - \hat{Y}), (1,1)\}$. 
All the components of \eqref{eqn:hardt} can be rewritten in terms of $\hat{\fctvar}$ and $\tilde{\fctvar}$, the fairness--confusion tensors corresponding to the classifiers $\hat{Y}$ and $\tilde{Y}$ respectively. This yields a LAFOP over $\tilde{z}$ with additional model-specific constraints derived from $\hat{z}$ on the solution space. More formally, we have the following optimization problem for post-processing:
\begin{definition}
Given a classifier to be post-processed and its corresponding \fct{} $\hat{z}$, the \emph{model-specific LAFOP} (MS-LAFOP) for EOd is the variant of LAFOP with model-specific constraints on the solution space as the following:
\begin{equation}
\argmin_{\tilde{\fctvar} \in \hat{\mathcal{K}}}
{(\perfvec \cdot \tilde{\fctvar})}^2
+ \lmvar \Vert \fairmat_{\textsc{EOd}} \tilde{\fctvar} \Vert^2_2, \text{ where }
\label{eqn:ms-lafop}
\end{equation} where
\begin{multline*}
\hat{\mathcal{K}} = \big\{ \tilde{\fctvar} \ge 0 : \marginalmat \tilde{\fctvar}\ = \marginalvec,  \|\tilde{\fctvar}\|_1 = 1, \\ 
\beta_a(\tilde{z}) \in \text{convhull}\left\{ (0,0), \beta_a(\hat{z}), \beta_a(1-\hat{z}), (1,1)\right\} \forall a \big\}
\end{multline*} with $\beta_a$ expressing ($FPR_a$, $TPR_a$) tuples computed from the corresponding \fct{} of group $a$.
\end{definition}
From the solution of MS-LAFOP, it is possible to compute mixing rates for post-processing the given classifier. We note that MS-LAFOP can be extended to other group fairness notions as long as the model-specific constraints are accordingly set up for them. For more details, refer to \Cref{sec:post-process-apdx}.

%%%%%%%%%%%%%%%%%%%%%%%%%%%%%%%%%%%%%%%%%%%%%%%%%%%%%%%%%%%%%%%%%%%%%%%%%%%%%%%%%%%%%%%

\begin{table*}[ht]
\small
\centering
\begin{tabular}{lc}
Sets of fairness definitions & Necessary conditions \\\hline \hline
\{CG, PP, DP, and any of EOp, PE, PCB, NCB, EFOR\} & $M_0 = M_1$ and $N_0 = N_1$ \\ \hline
\{CG, DP, and any of EOp, PE, PCB, NCB, EFOR\} & EBR only \\\hline
\multirow{2}{40em}{\{CG,EOp\}, \{CG,PCB\}, \{CG,EOp,PCB\},\{CG,EFOR,EOp\}, \{CG,EFOR,PCB\},\{CG,EFOR,EOp,PCB\}} &  $v_{0}=0$ \\
& or EBR \\ \hline
\multirow{2}{40em}{\{CG,PE\}, \{CG,NCB\}, \{CG,EOp,NCB\}, \{CG,EFOR,PE\}, \{CG,EFOR,NCB\}, \{CG,EFOR,EOp,NCB\}} &  $v_{1}=1$ \\
& or EBR \\ \hline
\multirow{2}{40em}{\{CG,EOd\}\cite{pleiss2017fairness}, \{CG, PCB, NCB\} \cite{kleinberg2016inherent},\{CG,EOd,PCB,NCB\}, \{CG,EFOR,EOd\}, \{CG,EFOR,PCB,NCB\},\{CG,EFOR,EOd,PCB,NCB\}} & ($v_{0}=0$ and $v_{1}=1$) \\
& or EBR \\ \hline \hline
\end{tabular}
\caption{Some sets of fairness definitions containing Calibration(CG), which are incompatible in the sense of \Cref{def:incompat} (left-column), together with their necessary conditions to be compatible (right column).
EBR is the equal base rate condition, $M_0/N_0 = M_1/N_1$. For other abbreviations, refer to \Cref{tab:def_fairness}. These are all special cases of \Cref{thm:calib_incomp}, while not exhaustive.}
\label{tab:cg-impossible}
\end{table*}

\section{Incompatible Group Fairness Definitions}%
\label{sec:incompatible} 

In this section, we show how LAFOP yields a more general view of understanding group fairness incompatibility results. As $\lmvar \to \infty$, for linear fairness functions $\fairfn^{(i)}(\fctvar) = \fairmat^{(i)} \fctvar$, LAFOP becomes equivalent to solving the following linear system of equations:
\begin{equation}
\small
\begin{pmatrix}
\fairmat^{(0)}\\
\vdots\\
\fairmat^{(K-1)}\\
\marginalmat
\end{pmatrix} \fctvar =
\begin{pmatrix}
0\\
\vdots\\
0\\
\marginalvec
\end{pmatrix}, \, \fctvar\ge0,
\label{eqn:linear-fair-compat}
\end{equation} 
Notice the compatibility of fairness conditions encoded by these $K$ fairness matrices $\fairmat^{(i)}$ is equivalent to having infinitely many solutions to the above linear system. We formally define (in)compatibility of fairness notions below based on this observation.
 
\begin{definition}
Let $\fairset = {\{\fairfn^{(i)}\}}^{K-1}_{i=0}$ be a set of linear fairness functions, encoded in a fairness matrix $\fairmat$ (of which each row corresponds to $\fairfn^{(i)}$), and let $\rankvar$ be the number of solutions for the system in \eqref{eqn:linear-fair-compat}. If $\rankvar = 0$, then $\fairset$ is said to be incompatible. Otherwise, $\fairset$ is compatible. When $\fairset$ is incompatible, some additional set of constraints on the dataset or the model is required for it to be compatible. 
\label{def:incompat}
\end{definition}

This means that in general, incompatibility results among the group fairness definitions can be proven simply by asking if and when solutions exist to their corresponding linear system of form~\eqref{eqn:linear-fair-compat}.

\subsection{The incompatibility involving CG}%
\label{sec:kleinberg}

We introduce a general incompatibility result involving CG that leads to many other new results as well as the one from \citet{kleinberg2016inherent}.

\begin{theorem}%
\label{thm:calib_incomp}
Let $B=2$ be the number of bins in the definition of calibration within groups fairness (CG) \cite{kleinberg2016inherent},
and $v_0$, $v_1$ be the scores, with $0 \le v_0 < v_1 \le 1$,
and $K>1$ with $\fairfn^{(0)}(\fctvar) = \fairmat_\textsc{CG}\fctvar$.
Then, the corresponding \eqref{eqn:linear-fair-compat} has the only solution
\begin{equation}
z_0 = \frac{1}{\cntvar (v_1 - v_0)}
\begin{pmatrix}
v_1 ( \posvar_1 - \cntvar_1 v_0) \\
v_0 (-\posvar_1 + \cntvar_1 v_1) \\
(1 - v_1)( \posvar_1 - \cntvar_1 v_0)  \\
(1 - v_0)(-\posvar_1 + \cntvar_1 v_1) \\
v_1 ( \posvar_0 - \cntvar_0 v_0) \\
v_0 (-\posvar_0 + \cntvar_0 v_1) \\
(1 - v_1)(\posvar_0 - \cntvar_0 v_0) \\
(1 - v_0)(-\posvar_0 + \cntvar_0 v_1)
\end{pmatrix},
\label{eqn:impossible-sol}
\end{equation}
and only when
\begin{equation}
    0\le v_0 \le \min_a\left(\frac{\posvar_a}{\cntvar_a}\right)
     \le \max_a\left(\frac{\posvar_a}{\cntvar_a}\right) \le v_1 \le 1.
\label{eqn:impossible-sol-cond}
\end{equation}
Otherwise, no solution exists.
\end{theorem}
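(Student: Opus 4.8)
The plan is to solve the linear system $\fairmat_{\textsc{cg}}\fctvar = 0$, $\marginalmat\fctvar = \marginalvec$ explicitly, show it admits a \emph{unique} solution in $\mathbb{R}^8$ (necessarily the $z_0$ of \eqref{eqn:impossible-sol}), and then decide exactly when that solution is entrywise nonnegative, hence in $\simplex$. Taking $K>1$ only appends further linear rows $\fairfn^{(j)}$, $j\ge 1$, to \eqref{eqn:linear-fair-compat} on top of $\fairmat_{\textsc{cg}}$ and $\marginalmat$, so any solution of \eqref{eqn:linear-fair-compat} must already equal this unique $z_0$; the theorem therefore reduces to (i) computing $z_0$ and (ii) characterizing its nonnegativity.

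For (i), note that $\fairmat_{\textsc{cg}}$ and $\marginalmat$ are block-diagonal across the two groups $a\in\{0,1\}$, so it suffices to solve, for each $a$, the $4\times 4$ system in $(TP_a, FN_a, FP_a, TN_a)$ consisting of the two calibration equations $(1-v_1)TP_a = v_1 FP_a$ and $(1-v_0)FN_a = v_0 TN_a$ together with the two marginal equations $TP_a + FN_a + FP_a + TN_a = \cntvar_a/\cntvar$, $TP_a + FN_a = \posvar_a/\cntvar$. The calibration equations are precisely the statement that, setting $p_a^+ := TP_a + FP_a$ and $p_a^- := FN_a + TN_a$, one has $(TP_a, FP_a) = (v_1 p_a^+, (1-v_1)p_a^+)$ and $(FN_a, TN_a) = (v_0 p_a^-, (1-v_0)p_a^-)$ --- a parametrization valid for \emph{every} $0\le v_0 < v_1\le 1$, the endpoint cases $v_0 = 0$ and $v_1 = 1$ included. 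Substituting, the marginal equations become the $2\times 2$ system $p_a^+ + p_a^- = \cntvar_a/\cntvar$, $v_1 p_a^+ + v_0 p_a^- = \posvar_a/\cntvar$, whose coefficient matrix has determinant $v_0 - v_1\ne 0$; hence $p_a^+ = (\posvar_a - v_0\cntvar_a)/(\cntvar(v_1-v_0))$ and $p_a^- = (v_1\cntvar_a - \posvar_a)/(\cntvar(v_1-v_0))$ are forced, and back-substitution reproduces the eight coordinates of \eqref{eqn:impossible-sol} exactly. Uniqueness is immediate since each step was an equivalence; equivalently, one verifies that \eqref{eqn:impossible-sol} solves the system and notes that $[\fairmat_{\textsc{cg}};\marginalmat]$ is nonsingular by the same determinant.

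For (ii), pull out the positive factor $1/(\cntvar(v_1-v_0))$; the remaining eight coordinates are $v_1(\posvar_a - \cntvar_a v_0)$, $v_0(\cntvar_a v_1 - \posvar_a)$, $(1-v_1)(\posvar_a - \cntvar_a v_0)$, $(1-v_0)(\cntvar_a v_1 - \posvar_a)$ for $a\in\{0,1\}$. Under $0\le v_0 < v_1\le 1$ we always have $v_1 > 0$ and $1-v_0 > 0$, while $v_0\ge 0$ and $1-v_1\ge 0$; so all eight are nonnegative exactly when $\posvar_a - \cntvar_a v_0\ge 0$ and $\cntvar_a v_1 - \posvar_a\ge 0$ for \emph{both} $a$ (necessity from $v_1>0$ and $1-v_0>0$; sufficiency because the remaining two factors are then products of nonnegatives), i.e.\ when $v_0\le \posvar_a/\cntvar_a\le v_1$ for $a=0,1$, which is \eqref{eqn:impossible-sol-cond}. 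Since $\marginalmat z_0 = \marginalvec$ already forces $\|z_0\|_1 = 1$ once $z_0\ge 0$, this is equivalent to $z_0\in\simplex$. If \eqref{eqn:impossible-sol-cond} fails, $z_0\notin\simplex$, so no $\fctvar$ satisfies even $\fairmat_{\textsc{cg}}\fctvar = 0$, $\marginalmat\fctvar = \marginalvec$, $\fctvar\ge 0$, let alone \eqref{eqn:linear-fair-compat} --- the ``otherwise, no solution'' case; if it holds, $z_0$ is the sole candidate, and for $K>1$ the set $\fairset$ is compatible precisely when $z_0$ additionally satisfies $\fairfn^{(j)}(z_0) = 0$ for every $j\ge 1$.

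The only delicate point is bookkeeping --- tracking the $1/\cntvar$ normalization and checking that the degenerate cases $v_0 = 0$, $v_1 = 1$ neither break the $p_a^\pm$ parametrization nor the sign argument (they merely send some coordinates of $z_0$ to exactly $0$, still feasible); otherwise this is routine $2\times 2$ linear algebra with no real obstacle. As a closing remark, substituting $z_0$ into an additional linear fairness row yields the concrete dataset conditions of \Cref{tab:cg-impossible}: e.g.\ $\fairmat_{\textsc{dp}}\,z_0 = (\cntvar_0\posvar_1 - \cntvar_1\posvar_0)/(\cntvar^2(v_1-v_0))$ vanishes iff the base rates are equal, and $\fairmat_{\textsc{eop}}\,z_0 = v_0 v_1(\cntvar_0\posvar_1 - \cntvar_1\posvar_0)/(\cntvar^2(v_1-v_0))$ vanishes iff $v_0 = 0$ or the base rates are equal; the other rows follow identically.
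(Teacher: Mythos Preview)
Your proof is correct and follows essentially the same approach as the paper: solve the (full-rank) system $[\fairmat_{\textsc{cg}};\marginalmat]\fctvar = [0;\marginalvec]$ to obtain the unique $z_0$, then read off \eqref{eqn:impossible-sol-cond} from $z_0\ge 0$, and for $K>1$ observe that any further constraint either passes through $z_0$ or kills all solutions. The paper's own proof states this tersely (``full rank, admits the solution, $\fctvar_0\ge 0$ yields the condition''); you have simply made the block-diagonal reduction, the $2\times 2$ solve in $(p_a^+,p_a^-)$, and the sign analysis explicit, which is helpful but not a different method.
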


\Cref{thm:calib_incomp} yields other extended results regarding the incompatibility of CG and other notions of fairness. As one canonical instance, simply substituting $z_0$ in \eqref{eqn:impossible-sol} to the linear system of the form in \eqref{eqn:linear-fair-compat} with PCB and NCB fairness matrices yields the following corollary, which is equivalent to the result presented in \citet{kleinberg2016inherent} (proof is in \Cref{sec:calib-proof}). 

\begin{corollary}[Re-derivation of \cite{kleinberg2016inherent}]%
\label{thm:tr2}
Consider a classifier that satisfies CG, PCB and NCB fairness simultaneously.
Then, at least one of the following statements is true:
\begin{enumerate}[nolistsep]
    \item the data have equal base rates for each class $\fairvar$, i.e.\
    $\posvar_0/\cntvar_0 = \posvar_1/\cntvar_1$, or
    \item the classifier has perfect prediction, i.e.\  $v_0 = 0$ and $v_1 = 1$.
\end{enumerate}
\end{corollary}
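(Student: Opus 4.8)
The plan is to leverage \Cref{thm:calib_incomp} to reduce the corollary to a short substitution. Since the classifier satisfies CG, the theorem forces its \fct{} to be exactly the vector $\fctvar_0$ of \eqref{eqn:impossible-sol} (and guarantees the feasibility window \eqref{eqn:impossible-sol-cond}). So the only remaining task is to ask \emph{when} this fixed $\fctvar_0$ also satisfies the two extra linear conditions PCB and NCB, whose fairness matrices $\fairmat_\textsc{pcb}$ and $\fairmat_\textsc{ncb}$ are given in \Cref{tab:def_fairness}. Because $\fairmat_\textsc{pcb}\fctvar_0 = 0$ and $\fairmat_\textsc{ncb}\fctvar_0 = 0$ are scalar equations, the nonzero prefactors $\min_\fairvar(\posvar_\fairvar)$ and $\min_\fairvar(\cntvar_\fairvar - \posvar_\fairvar)$ may be dropped, and I would simply read off the components $TP_\fairvar, FN_\fairvar, FP_\fairvar, TN_\fairvar$ from \eqref{eqn:impossible-sol} and plug them in.

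For PCB, the condition $\fairmat_\textsc{pcb}\fctvar_0 = 0$ is the equality $\tfrac{v_1 TP_1 + v_0 FN_1}{\posvar_1} = \tfrac{v_1 TP_0 + v_0 FN_0}{\posvar_0}$. The key computation is to check that, after substituting \eqref{eqn:impossible-sol}, the numerator collapses to the clean expression
\[ v_1 TP_\fairvar + v_0 FN_\fairvar = \posvar_\fairvar(v_0 + v_1) - \cntvar_\fairvar v_0 v_1, \]
so that the PCB equality, upon clearing the $\posvar_\fairvar$ denominators, reduces to $v_0 v_1(\posvar_0\cntvar_1 - \posvar_1\cntvar_0) = 0$. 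Since $0 \le v_0 < v_1 \le 1$ forces $v_1 > 0$, PCB holds if and only if $v_0 = 0$ or the equal-base-rate (EBR) condition $\posvar_0\cntvar_1 = \posvar_1\cntvar_0$ holds.

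The NCB step is parallel. The condition is $\tfrac{v_1 FP_1 + v_0 TN_1}{\cntvar_1 - \posvar_1} = \tfrac{v_1 FP_0 + v_0 TN_0}{\cntvar_0 - \posvar_0}$, and the analogous simplification gives
\[ v_1 FP_\fairvar + v_0 TN_\fairvar = \posvar_\fairvar(1 - v_0 - v_1) + \cntvar_\fairvar v_0 v_1. \]
Cross-multiplying the NCB equality and collecting terms, I expect the whole difference to factor as $(1-v_0)(1-v_1)(\posvar_1\cntvar_0 - \posvar_0\cntvar_1) = 0$, using $1 - v_0 - v_1 + v_0 v_1 = (1-v_0)(1-v_1)$. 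Since $v_0 < 1$, NCB holds if and only if $v_1 = 1$ or EBR. Combining the two cases closes the argument: the classifier satisfies both PCB and NCB, so if EBR fails then PCB forces $v_0 = 0$ and NCB forces $v_1 = 1$, i.e.\ perfect prediction; otherwise EBR holds. This is exactly the claimed dichotomy.

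The main obstacle is purely the algebra: verifying the two displayed simplifications and, for NCB, confirming that the cross-multiplied expression factors cleanly so that the common determinant-like factor $\posvar_0\cntvar_1 - \posvar_1\cntvar_0$ (equivalently, EBR) can be extracted. Recognizing the factorization $1 - v_0 - v_1 + v_0 v_1 = (1-v_0)(1-v_1)$ is what makes the NCB case yield $v_1 = 1$ symmetrically to the PCB case yielding $v_0 = 0$.
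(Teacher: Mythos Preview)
Your proposal is correct and follows essentially the same route as the paper: both proofs invoke \Cref{thm:calib_incomp} to pin down $\fctvar_0$, then substitute it into the PCB and NCB rows and factor the resulting scalars as $v_0 v_1(\posvar_0\cntvar_1 - \posvar_1\cntvar_0)$ and $(1-v_0)(1-v_1)(\posvar_1\cntvar_0 - \posvar_0\cntvar_1)$ respectively. The paper compresses this into a single matrix--vector product $\begin{pmatrix}\fairmat_\textsc{pcb}\\ \fairmat_\textsc{ncb}\end{pmatrix}\fctvar_0$ and reads off the same factorizations, whereas you treat PCB and NCB separately and explicitly extract the intermediate conditions ($v_0=0$ or EBR; $v_1=1$ or EBR), which incidentally also verifies several of the other rows in \Cref{tab:cg-impossible}.
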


Similar approach can be applied to derive incompatibilities of CG with other linear and quadratic notions of fairness as below (proofs in \Cref{sec:cgdp}, \Cref{sec:cgpp}).

\begin{corollary}(Linear notion of fairness: DP)
\label{thm:cgdp}
Consider a classifier that satisfies CG and DP fairness simultaneously.
Then, the data have equal base rates for each group $\fairvar$.
\end{corollary}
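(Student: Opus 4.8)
The plan is to lean entirely on \Cref{thm:calib_incomp}. Any valid \fct{} automatically satisfies the marginal‑sum constraints $\marginalmat\fctvar=\marginalvec$, so a classifier that satisfies CG has a \fct{} $\fctvar$ solving the linear system \eqref{eqn:linear-fair-compat} built from $\fairmat_\textsc{cg}$ together with $\marginalmat$. By \Cref{thm:calib_incomp} this already pins $\fctvar$ down to the single candidate $z_0$ of \eqref{eqn:impossible-sol} (and forces the score condition \eqref{eqn:impossible-sol-cond}, in particular $v_0<v_1$). Hence it suffices to ask when this $z_0$ additionally satisfies the DP equation $\fairmat_\textsc{dp}\fctvar=0$; if it does not, CG and DP are incompatible as stated, and if it does, the resulting condition on the data is exactly what we must extract.

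Next I would substitute $z_0$ into $\fairmat_\textsc{dp}$. Reading the components of $z_0$ off \eqref{eqn:impossible-sol}, the group‑$1$ predicted‑positive mass telescopes,
\[
(z_0)_1+(z_0)_3=\frac{(\posvar_1-\cntvar_1 v_0)\bigl(v_1+(1-v_1)\bigr)}{\cntvar(v_1-v_0)}=\frac{\posvar_1-\cntvar_1 v_0}{\cntvar(v_1-v_0)},
\]
and symmetrically $(z_0)_5+(z_0)_7=(\posvar_0-\cntvar_0 v_0)/(\cntvar(v_1-v_0))$. Plugging these into $\fairmat_\textsc{dp}=(\cntvar_0,0,\cntvar_0,0,-\cntvar_1,0,-\cntvar_1,0)/\cntvar$ from \eqref{eqn:dp} and using that the $v_0$ contributions cancel gives
\[
\fairmat_\textsc{dp}\,z_0=\frac{\cntvar_0(\posvar_1-\cntvar_1 v_0)-\cntvar_1(\posvar_0-\cntvar_0 v_0)}{\cntvar^2(v_1-v_0)}=\frac{\cntvar_0\posvar_1-\cntvar_1\posvar_0}{\cntvar^2(v_1-v_0)}.
\]
Since $v_1>v_0$ the denominator is nonzero, so $\fairmat_\textsc{dp}z_0=0$ if and only if $\cntvar_0\posvar_1=\cntvar_1\posvar_0$, i.e.\ $\posvar_0/\cntvar_0=\posvar_1/\cntvar_1$. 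Therefore a classifier satisfying CG and DP simultaneously forces equal base rates across groups, which is the claim.

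There is essentially no hard step once \Cref{thm:calib_incomp} is available; the only point to keep straight is the logical direction. We are proving \emph{necessity}, so it is enough that the CG‑plus‑marginal system already determines $\fctvar=z_0$ uniquely, after which DP is a one‑line consistency check. (The converse direction — that equal base rates actually render CG and DP jointly satisfiable — would additionally require verifying $z_0\ge 0$ under \eqref{eqn:impossible-sol-cond}, but that is not needed for the stated corollary and I would not include it.)
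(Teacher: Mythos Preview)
Your proposal is correct and follows essentially the same approach as the paper: invoke \Cref{thm:calib_incomp} to pin the CG-satisfying \fct{} to $z_0$, then substitute into $\fairmat_\textsc{dp}$ to obtain $\fairmat_\textsc{dp}\,z_0=(\posvar_1\cntvar_0-\posvar_0\cntvar_1)/\bigl(\cntvar^2(v_1-v_0)\bigr)$, from which equal base rates follow. The paper states only this final substitution; your telescoping computation and the remark about necessity versus sufficiency are extra detail, not a different route.
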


\begin{corollary}(Quadratic notion of fairness: PP)
\label{thm:cgpp}
Consider a classifier that satisfies CG and PP fairness simultaneously.
Then, at least one of the following is true:%statements must hold:
\begin{enumerate}[nolistsep]
\item %The score for the negative class must be
$v_0 = (M_1 - M_0) / (N_1 - N_0)$.
\item %The score for the positive class must be
$v_1 = 1$.
\end{enumerate}
\end{corollary}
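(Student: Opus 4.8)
The plan is to obtain \Cref{thm:cgpp} as a one-line corollary of \Cref{thm:calib_incomp}, exactly as \Cref{thm:tr2} was obtained from it, but substituting the \emph{quadratic} PP constraint in place of a linear one. First I would note that every classifier's \fct{} lies in $\simplex$ and hence already satisfies $\marginalmat\fctvar = \marginalvec$; adjoining the CG equations $\fairmat_\textsc{cg}\fctvar = 0$ of \eqref{eqn:cg} produces precisely the linear system \eqref{eqn:linear-fair-compat} treated in \Cref{thm:calib_incomp}, whose only solution is the vector $z_0$ of \eqref{eqn:impossible-sol}. Hence a classifier satisfying CG must have $\fctvar = z_0$, and in particular its entries $TP_1, FP_1, TP_0, FP_0$ are read off as the first, third, fifth and seventh entries of $z_0$ in \eqref{eqn:impossible-sol}; I would also keep in reserve that feasibility of $z_0$ forces \eqref{eqn:impossible-sol-cond}, as this is needed for the degenerate cases below.

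Next I would impose PP. By \Cref{tab:def_fairness}, PP is the single scalar equation $\frac{1}{2}\fctvar^T\fairqfmat_\textsc{pp}\fctvar = 0$, i.e.\ $TP_1 FP_0 = TP_0 FP_1$. Substituting the four entries of $z_0$ and clearing the common denominator $\bigl(\cntvar(v_1-v_0)\bigr)^2$ reduces this to a polynomial equation in $v_0, v_1, \posvar_0, \posvar_1, \cntvar_0, \cntvar_1$ alone. The plan is then to factor this polynomial into a product in which one factor is proportional to $(1-v_1)$ --- vanishing exactly when $v_1 = 1$, which is condition (2) --- and the other is proportional to $(\posvar_1 - v_0\cntvar_1) - (\posvar_0 - v_0\cntvar_0)$, which (when $\cntvar_0 \neq \cntvar_1$) vanishes exactly when $v_0 = (\posvar_1 - \posvar_0)/(\cntvar_1 - \cntvar_0)$, i.e.\ condition (1). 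Since a product vanishes iff one of its factors does, this yields the stated ``at least one of'' dichotomy.

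The core algebra here is routine; I expect the genuine work to lie in the case analysis on the boundary of $\simplex$, which is where the main obstacle is. The delicate regimes are those in which $z_0$ has a vanishing entry --- $v_1 = 1$ (no false positives in either group), $v_0 = 0$ (no false negatives), or $v_0 = \posvar_a/\cntvar_a$ for some $a$ (a group with no predicted positives, so the conditional probability defining PP is fixed only by convention) --- together with the sub-case $\cntvar_0 = \cntvar_1$, where the second factor collapses to $\posvar_0 = \posvar_1$. For each such regime I would check directly that one of conditions (1), (2) already holds: this is immediate when $v_1 = 1$, and in the no-predicted-positives case one uses \eqref{eqn:impossible-sol-cond} to show $v_0$ is driven to the value in condition (1). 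Finally I would note, echoing \citet{kleinberg2016inherent}, that PP only constrains a classifier nontrivially when both groups actually receive positive predictions, so the regime $v_1 \neq 1$ is the one in which the second factor is the operative condition.
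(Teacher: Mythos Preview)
Your proposal is the same strategy the paper uses: invoke \Cref{thm:calib_incomp} to pin the \fct{} at $z_0$, substitute into the PP quadratic $\tfrac12\fctvar^T\fairqfmat_\textsc{pp}\fctvar$, and factor. The paper's proof is a single line, recording the substituted expression as
\[
\fairfn_\textsc{pp}(z_0)=v_1(1-v_1)\bigl((M_1-N_1v_0)^2-(M_0-N_0v_0)^2\bigr)
\]
and reading off the conditions; it does not treat any of the boundary regimes you flag. Note that this factorization differs slightly from the one you anticipate: beyond your factors $(1-v_1)$ and $(M_1-M_0)-v_0(N_1-N_0)$, the difference of squares contributes an additional factor $(M_1+M_0)-v_0 N$, and there is a leading $v_1$ (which the paper dismisses as inadmissible since $v_1>v_0\ge0$). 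Your ``routine algebra'' step will surface these once you actually carry out the substitution.
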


From \Cref{thm:calib_incomp} and its corollaries, we curate the extended incompatibility results involving CG in \Cref{tab:cg-impossible} along with conditions for compatibility. To our knowledge, all cases other than the bottom row of the table are new. 

\subsection{The incompatibility of \{PE, EFNR, PP\}}

Using the same logic as the previous section, we re-derive an incompatibility result in \citet{chouldechova2017fair} and provide more precise necessary conditions for compatibility. For details of the proof, refer to \Cref{sec:proof-chould}.

\begin{theorem}[Restatement of \citet{chouldechova2017fair}]
\label{thm:chouldechova}
Consider a classifier that satisfies \{PE, EFNR, PP\}. Then, at least one of these statements must be true:
\begin{enumerate}[nolistsep]
    \item The classifier has no true positives.
    \item The classifier has no false positives.
    \item Each protected class has the same base rate.%$M_1 / N_1 = M_0 / N_0$.
\end{enumerate}
\end{theorem}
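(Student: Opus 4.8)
The plan is to convert the three fairness conditions, together with the marginal constraints defining $\simplex$, into a handful of algebraic relations among the entries of the \fct, and then to finish with a short case analysis whose generic branch is exactly the precision identity of \citet{chouldechova2017fair}. Using \Cref{tab:def_fairness}, the conditions read: PE is $(\cntvar_0 - \posvar_0)FP_1 = (\cntvar_1 - \posvar_1)FP_0$; EFNR is $\posvar_0 FN_1 = \posvar_1 FN_0$; and PP is the quadratic relation $TP_1 FP_0 = TP_0 FP_1$. The marginal identity $FN_\fairvar = \posvar_\fairvar - TP_\fairvar$ collapses the EFNR equation to the clean linear relation $\posvar_1 TP_0 = \posvar_0 TP_1$, and $FP_\fairvar + TN_\fairvar = \cntvar_\fairvar - \posvar_\fairvar$ lets us express everything through $TP_\fairvar$ and $FP_\fairvar$ with nonnegativity constraints.

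The main argument handles the generic case in which both groups issue at least one positive prediction, $TP_\fairvar + FP_\fairvar > 0$, and each group contains both labels, $0 < \posvar_\fairvar < \cntvar_\fairvar$. Then PP says the precision $v := TP_\fairvar/(TP_\fairvar + FP_\fairvar)$ is common to the two groups, EFNR says the false-negative rate $q := FN_\fairvar/\posvar_\fairvar$ is common, and PE says the false-positive rate $r := FP_\fairvar/(\cntvar_\fairvar - \posvar_\fairvar)$ is common. Substituting $TP_\fairvar = \posvar_\fairvar(1-q)$ and $FP_\fairvar = (\cntvar_\fairvar - \posvar_\fairvar)r$ into the definition of $v$ and dividing by $\cntvar_\fairvar$ gives $v = p_\fairvar(1-q)\big/\big(p_\fairvar(1-q) + (1-p_\fairvar)r\big)$ with $p_\fairvar := \posvar_\fairvar/\cntvar_\fairvar$, which rearranges to $p_\fairvar\big[(1-q)(1-v) + vr\big] = vr$. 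If the bracket is nonzero, $p_\fairvar$ is determined and thus independent of $\fairvar$, giving equal base rates (statement 3). If the bracket vanishes, then since $(1-q)(1-v)$ and $vr$ are both nonnegative, each vanishes; from $vr=0$, either $v = 0$, forcing $TP_0 = TP_1 = 0$ (statement 1), or $r = 0$, forcing $FP_0 = FP_1 = 0$ (statement 2).

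It then remains to dispatch the degenerate cases set aside above: a group with no positive predictions (so $TP_\fairvar = FP_\fairvar = 0$), and a group with $\posvar_\fairvar \in \{0,\cntvar_\fairvar\}$ (base rate $0$ or $1$, or the group empty). In each such case one substitutes directly into $\posvar_1 TP_0 = \posvar_0 TP_1$, $(\cntvar_0 - \posvar_0)FP_1 = (\cntvar_1 - \posvar_1)FP_0$, and $TP_1 FP_0 = TP_0 FP_1$, and nonnegativity forces either the other group's $TP$ or its $FP$ to vanish (statement 1 or 2), or both groups to share the degenerate base rate (statement 3). I expect this degenerate-case bookkeeping --- tracking which denominators are permitted to vanish and checking that every branch lands in one of the three alternatives --- to be the only delicate part; the generic branch is a one-line computation.
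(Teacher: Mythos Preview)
Your proposal is correct and follows essentially the same approach as the paper: solve the system $\fairfn_\textsc{PP}(\fctvar) = \fairfn_\textsc{PE}(\fctvar) = \fairfn_\textsc{EFNR}(\fctvar) = 0$ together with the marginal constraints $\marginalmat\fctvar = \marginalvec$ and read off the three alternatives. The paper's own proof is a one-line sketch to that effect, whereas you carry out the computation explicitly by parameterizing via the common precision $v$, false-negative rate $q$, and false-positive rate $r$---which is precisely Chouldechova's original identity---and then separately handle the degenerate branches; this is the same computation, just fully written out.
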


\Cref{thm:chouldechova} systematically shows that equal false positive rates, equal false negative rates, and predictive parity are compatible only under specific data/model-dependent circumstances, that were otherwise not clear in the original statements in \citet{chouldechova2017fair}. 

\section{Experiments}%
\label{sec:experiments}

In this section we show how the FACT diagnostic can practically show the relative impact of several notions of fairness on accuracy on synthetic and real datasets\footnote{Code available:  \href{https://github.com/wnstlr/FACT}{\texttt{github.com/wnstlr/FACT}}}. First we introduce FACT Pareto frontiers which characterize a model's achievable accuracy for a given set of fairness conditions, as a tool for understanding the trade-offs and contextualizing some recent works in fair classification (\Cref{sec:frontier}). We then explore a model-agnostic assessment of multiple fairness conditions via LAFOP (\Cref{sec:exact-relaxed}, \Cref{sec:multi}), as well as a model-specific assessment of post-processing methods in fair classification via MS-LAFOP (\Cref{sec:post-exp}, \Cref{sec:post-process-apdx}). 

\subsection{Datasets}%
\label{sec:datasets}
We study a synthetic dataset similar to that in \citet{zafar2015fairness},
consisting of two-dimensional features along with a single binary protected attribute that is either sampled from an independent Bernoulli distribution (``unbiased'' variant, denoted \textbf{S(U)}),
or sampled dependent on the features (``biased'' variant, denoted \textbf{S(B)}). The synthetic dataset consists of two-dimensional data $\data = (x_0, x_1)$ that follow the Gaussian distributions
\begin{equation}
\small
\begin{aligned}
\data | \truthvar = 1 \sim & \gaussian\left(\begin{pmatrix}
2\\2
\end{pmatrix}, \begin{pmatrix}
5 & 1 \\ 1 & 5
\end{pmatrix} \right) \\ 
\data | \truthvar = 0 \sim & \gaussian\left(\begin{pmatrix}
-2\\-2
\end{pmatrix}, \begin{pmatrix}
10 & 1 \\ 1 & 3
\end{pmatrix} \right).
\end{aligned}
\end{equation} 

For the S(U) dataset, the protected attribute value is independent of $\data$ and $\truthvar$,
and is instead distributed according to the Bernoulli distribution $\fairvar \sim \bernoulli\left(\frac 1 2\right)$. This notion of fairness was described in \cite{calders2009indep}.

For the S(B) dataset, the protected attribute value is assigned as $a|\data = \sgn(x_0)$, which corresponds to a situation when some features (but not all)
encode a protected attribute. 

We also study the
UCI Adult dataset~\cite{UCIMLrepo},
a census dataset used for income classification tasks where we consider sex as the protected attribute of interest.

\subsection{FACT Pareto frontiers}
\label{sec:frontier}

\begin{figure}[ht]
\centering
\includegraphics[width=\columnwidth]{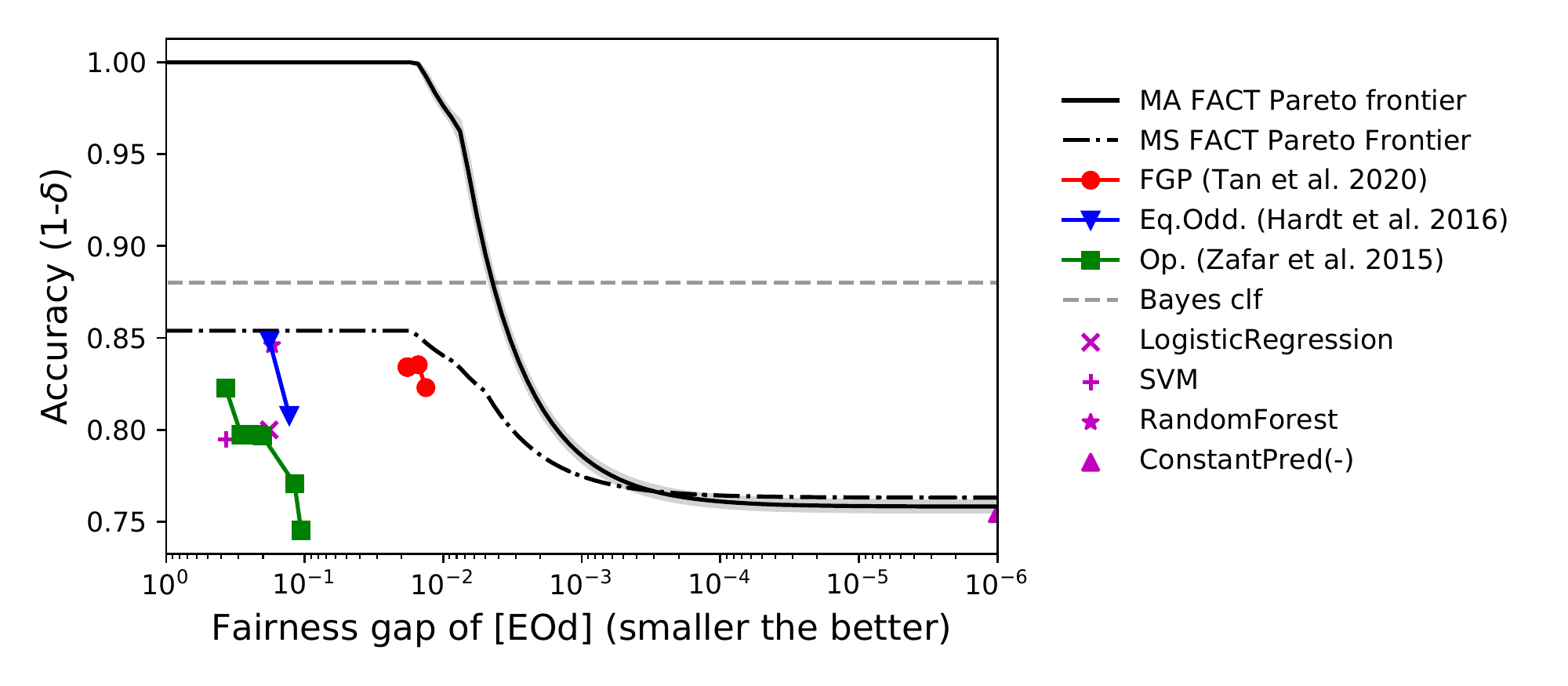}
\caption{Model-agnostic (MA) and model-specific (MS) FACT Pareto frontiers of equalized odds on the Adult dataset. Three fair models (FGP, Eq.Odd., Op.) are shown in context by varying the strength of the fairness condition imposed, along with some baseline models (LR, SVM, RF, ConstantPrediction). The MA frontier should be interpreted relative to the Bayes error because it is oblivious to it --- $\delta=0$ means that the upper bound of the accuracy is the accuracy of the Bayes classifier, not 1. The MS frontier on the other hand provides realistic more bounds.}
\label{fig:frontier}
\end{figure}

With LAFOP and MS-LAFOP, one can naturally consider a FACT Pareto frontier of accuracy and fairness by plotting $(\fairepsvar, \perfepsvar)$ values of the $(\fairepsvar, \perfepsvar)$-solutions. In this section, we want to highlight the use of this frontier in the context of several published results in the literature as well as its implications.

The FACT Pareto frontier can be computed both in model-agnostic (MA) and model-specific (MS) scenarios by solving LAFOP and MS-LAFOP respectively, and \Cref{fig:frontier} shows such example on the Adult dataset for EOd fairness. We also consider three fair classification models: \textbf{FGP}~\cite{tan2019learning}, \textbf{Op.}~\cite{zafar2015fairness}, and \textbf{Eq.Odd.}~\cite{hardt2016equality}, individually representing three different approaches one can take in training fair models (imposing fairness before, during, or after training). Some baseline models (logistic regression, SVM, random forest) are also plotted for reference, and a perfectly fair classifier (ConstantPredict: predicting all instances to be negative) on the bottom right corner is considered as an edge-case. 

It is important to note that the MA FACT Pareto frontier should be interpreted as characterizing the model's achievable accuracy \emph{relative} to the Bayes error (i.e., the degree to which the added fairness constraints adversely impact the Bayes error), which in this case is empirically estimated at around 0.12 from a wide range of ML models that have been tested on the Adult datset~\cite{chakrabarty2018statistical}. This relatively less realizable bound calls for a model-specific counterpart, the MS FACT Pareto frontier, which limits the frontier to be derived from a given pre-trained classifier. As shown in \Cref{fig:frontier}, it indeed provides a more reasonable frontier for the models considered.

Placing different types of classifiers on the frontier, it is easy to visually grasp strengths and weaknesses of each models. FGP seems to outperform all other models in terms of the trade-off, while Op and EqOdd suffer more from early accuracy drops. The frontier further informs that for any model trained, only for fairness gaps below $10^{-2}$ will the accuracy start to suffer. Such understanding of the trade-offs will be helpful in anticipating practical limitations of models to be trained, as well as in comparing multiple models to determine which is better-suited for different situations. 

\begin{figure*}[ht]
\includegraphics[width=\textwidth]{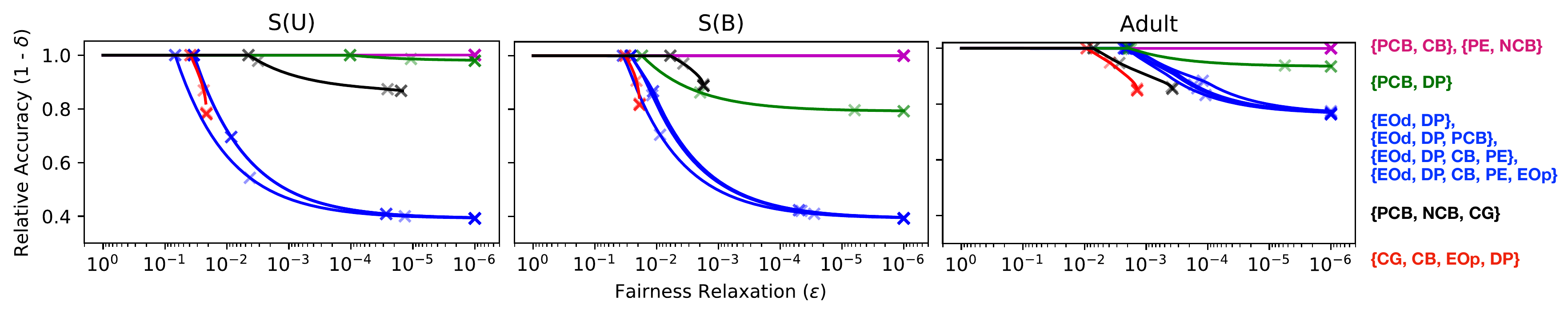}
\caption{Model-agnostic FACT Pareto frontier for different groups of fairness notions (colored and grouped according to their convergence value as $\fairepsvar \to  0$) for three datasets (\Cref{sec:datasets}). The bottom two groups of fairness notions are incompatible (black, red), hence the halted trajectories before reaching smaller values of $\fairepsvar$. Similar convergence behaviors within the fairness groups in blue reflect the dominance of \{EOd, DP\} -- any additional fairness notions added on top of these have no impact on the convergence value. Best viewed in color.}
\label{fig:eps-delta-curve}
\end{figure*}

In the rest of the following sections and figures, for the model-agnostic analysis, $\perfepsvar$ should be interpreted in reference to the Bayes error, i.e $\perfepsvar = 0$ means that the upper bound of the best-achievable accuracy is the accuracy of the Bayes classifier, not 1.

\subsection{Model-agnostic scenario with multiple fairness conditions}
\label{sec:exact-relaxed}

We are now interested in how a \emph{group} of fairness conditions simultaneously affect accuracy. This can be assessed by looking at the shape of the MA FACT Pareto frontier of LAFOP with multiple fairness constraints, particularly $\perfepsvar$ values of $(\fairepsvar, \perfepsvar)$-solutions when $\fairepsvar$ is varied to be zero (or very close to it) on multiple fairness notions. \Cref{fig:eps-delta-curve} shows this in two different ways: (i) ($\fairepsvar$,$\perfepsvar$)-solutions obtained when fairness conditions are imposed as hard inequality constraints instead of as regularizers, i.e. solving $\argmin_{\fctvar \in \simplex}
{(\perfvec \cdot \fctvar)^2} \text{ s.t. } \Vert \fairmat \fctvar \Vert^2_2 \leq \fairepsvar$ (solid line), and (ii) 
($\fairepsvar$,$\perfepsvar$)-solutions obtained from the LAFOP \eqref{eqn:LAFOP} while varying $\lmvar$s (crosses). Different groups of fairness notions are colored according to their convergence behaviors. 

Similar trajectories and convergence of the curves allow us to identify fairness notions that come ``for free'' given some others, in terms of additional accuracy drops. In other words, the Pareto frontiers are effective at demonstrating the relative strength of the fairness notions within a group. For instance, under \{EOd, DP\} (third group, blue) the best attainable accuracy drops by over 60 percent for S(U) and S(B), but we also observe that adding CB, PE, and/or PCB on top of them causes no additional accuracy drop -- \{EOd, DP\} essentially determines $\perfepsvar$ for the entire group of fairness notions in blue.

The MA FACT Pareto frontiers for multiple fairness conditions also show not only the existing incompatibility of the fairness notions, but also how much relaxation is required for them to be approximately compatible. 
The halted trajectories before hitting much smaller $\fairepsvar$ for the bottom two groups in black and red clearly verify this. Because the S(U) dataset has a smaller base rate gap between the groups compared to the Adult or the S(B) dataset by design, the incompatibility in S(U) becomes only visible at a much smaller $\fairepsvar$ value. 

Taking a more macroscopic perspective, the MA FACT Pareto frontiers also show which dataset allows overall better trade-off scheme compared to the others. Because the S(U) dataset was designed to be less biased compared to the S(B) dataset, it exhibits significantly smaller drop in overall accuracy, particularly for the green group involving DP. The way S(U) was designed aligns with this observation, as the sensitive attributes were randomly sampled independently from the features. However, EOd and DP together (in blue) drives down the accuracy just like the biased counterpart, which demonstrates how conservative EOd fairness is for these datasets. 

More observations and experiments are presented in \Cref{sec:multi}. It is possible to further extend these analyses to an arbitrary number of fairness constraints imposed on LAFOP, as well as to other performance metrics like precision or recall as seem fit. 

\subsection{Model-specific scenario with post-processing methods}
\label{sec:post-exp}

While the MA FACT Pareto frontier shows a broader trade-off landscape for any classifiers, model-specific analysis using MS-LAFOP in \eqref{eqn:ms-lafop} can be helpful in practice with more reasonable MS Pareto frontiers. Also after solving the MS-LAFOP, its solution can be used to compute the mixing rates for post-processing any given classifier just like done in \citet{hardt2016equality}. For more details, refer to \Cref{sec:post-process-apdx}.

\begin{figure}[ht]
\centering
\includegraphics[width=0.8\columnwidth]{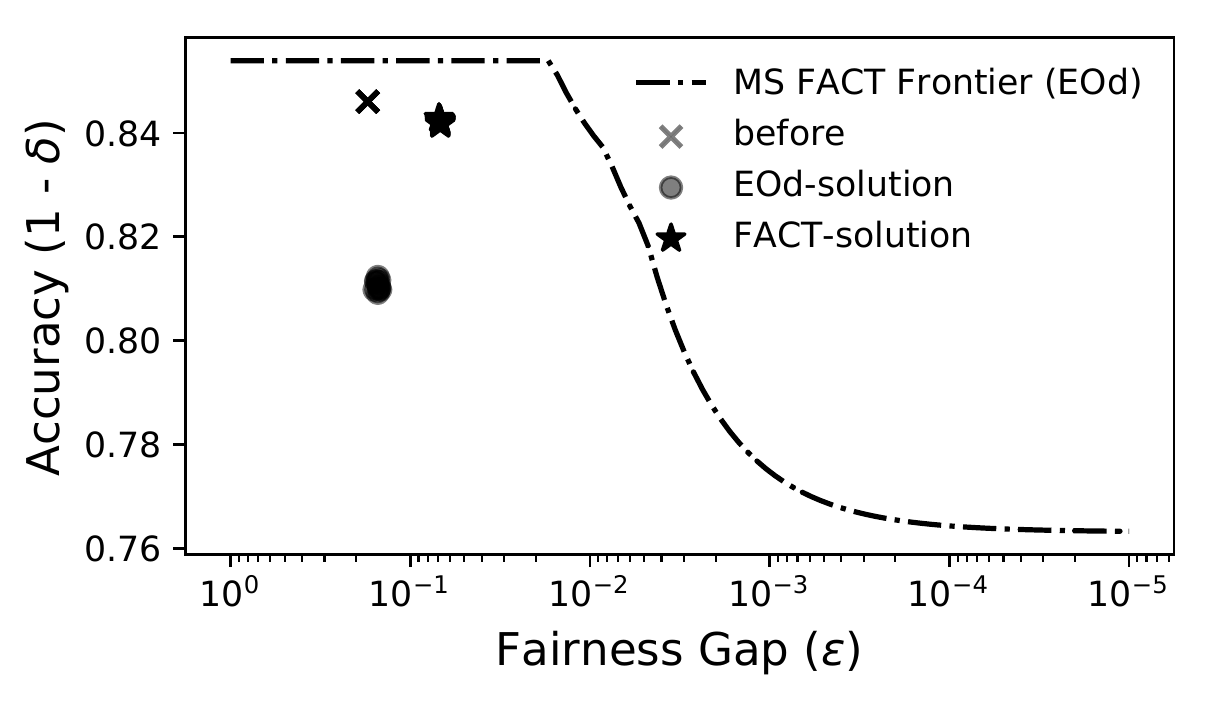}
\caption{Model-specific FACT Pareto frontier of EOd on Adult dataset. Compared to the model-agnostic frontier, it yields a more realizable bounds on the trade-off between fairness and accuracy. Post-processed solutions for the given classifiers (crosses) using the algorithm in \cite{hardt2016equality} (circles, EOd-solution) and FACT (stars, FACT-solution) are also shown. The FACT-solutions suffer significantly less from the trade-off, yielding competitive accuracy to the original classifiers while achieving smaller fairness gaps compared to the EOd-solutions.}
\label{fig:post-fig}
\end{figure}

\Cref{fig:post-fig} shows the MS FACT Pareto frontier of EOd computed from MS-LAFOP for the Adult dataset (it is a zoomed-in version of the MA FACT Pareto frontier in \Cref{fig:frontier}). We also plot two types of post-processed classifiers: EOd-solutions using the algorithm in \citet{hardt2016equality} (circles), and FACT-solutions using MS-LAFOP (stars). EOd solutions undergo steeper trade-off while the FACT-solutions are able to find a better configuration with smaller fairness gaps, retaining a competitive accuracy level to the original classifier (cross). 

\section{Conclusions}

The \name\ diagnostic facilitates systematic reasoning about different kinds of trade-offs involving arbitrarily many notions of performance and group fairness notions, which all can be expressed as functions of the \fct. 
In our formalism, the majority of group fairness definitions in the literature are in fact linear or quadratic thus are easy to be imposed as constraints to the PFOP.
The \name{} diagnostic further benefits from elementary linear algebra and convex optimization to provide a unified perspective of viewing fairness--fairness trade-offs and fairness--performance trade-offs. We have also empirically demonstrated the practical use of the \name{} diagnostic in several scenarios. 
Many of the presented results require only linear fairness functions and accuracy, as in the LAFOP/MS-LAFOP setting. Nevertheless, it is easy to extend this to quadratic fairness functions with more varied performance metrics depending on different use cases.
We also briefly introduce a small theoretical result regarding fairness--accuracy trade-offs using the FACT diagnostic in \Cref{sec:cg-accuracy}, which deserves further analysis.

\section*{Acknowledgements}

We thank Valerie Chen, Jeremy Cohen, Amanda Coston, Mikhail Khodak, Jeffrey Li, Liam Li, Gregory Plumb, Nick Roberts, and Samuel Yeom for helpful feedback and discussions. This work was supported in part by DARPA FA875017C0141, the National Science Foundation grants IIS1705121 and IIS1838017, an Okawa Grant, a Google Faculty Award, an Amazon Web Services Award, a JP Morgan A.I. Research Faculty Award, and a Carnegie Bosch Institute Research Award. JSK acknowledges support from Kwanjeong Educational Fellowship. Any opinions, findings and conclusions or recommendations expressed in this material are those of the author(s) and do not necessarily reflect the views of DARPA, the National Science Foundation, or any other funding agency.

This paper was prepared for information purposes by the Artificial Intelligence Research  group of JPMorgan Chase \& Co and its affiliates (``JP Morgan''), and is not a product of the Research Department of JP Morgan. JP Morgan makes no representation and warranty whatsoever and disclaims all liability, for the completeness, accuracy or reliability of the information contained herein.  This document is not intended as investment research or investment advice, or a recommendation, offer or solicitation for the purchase or sale of any security, financial instrument, financial product or service, or to be used in any way for evaluating the merits of participating in any transaction, and shall not constitute a solicitation under any jurisdiction or to any person, if such solicitation under such jurisdiction or to such person would be unlawful. © 2020 JPMorgan Chase \& Co. All rights reserved.

\bibliography{main}
\bibliographystyle{icml2020}

\clearpage
\onecolumn
\appendix

\section{Proof of \Cref{thm:calib_incomp}}
\label{sec:kleinbergproof}

A useful strategy is to solve \eqref{eqn:linear-fair-compat} for a set of solutions,
then ask if any of these solutions satisfies an additional fairness constraint
$\fairfn^{(K)}(\fctvar) = 0$.
This proof, as well as many of the ones below, illustrate this strategy in practice.

\begin{proof}
First, set $K=1$ and $\fairmat^{(0)} = \fairmat_\textsc{cg}$ in \eqref{eqn:linear-fair-compat}.
Since $v_0 \ne v_1$, the matrix $\fairmat$ is full rank and therefore admits the solution \eqref{eqn:impossible-sol}. Considering $\fctvar_0 \ge 0 $ yields immediately the condition \eqref{eqn:impossible-sol-cond}.

Next, set $K>1$. Then either $\fctvar_0$ is a solution (which is the case when all other fairness notions are linear and linearly dependent on $\begin{pmatrix} \fairmat_\textsc{cg} \\ \marginalmat \end{pmatrix}$),
or otherwise no solution exists to both \eqref{eqn:linear-fair-compat} and
$\fairfn^{(1)}(\fctvar) = \cdots = \fairfn^{(K-1)}(\fctvar) = 0$ simultaneously.
\end{proof}

This theorem states that $\Phi=$\{CG\} is incompatible when $v_0 \ne v_1$, since it is a singleton set of incompatible fairness.

The condition $v_0 \ne v_1$ is necessary in \Cref{thm:calib_incomp},
which is reasonable to assume as we would expect the positive class to have a higher score than the negative class in the definition of CG.
We can prove the necessity of this condition by contradiction. In the degenerate case $v_0 = v_1 = v$, $\Phi=$\{CG\} is a set of compatible fairness notions. It turns out that \eqref{eqn:linear-fair-compat} with $K=1$ is only on rank 6.
Denoting \textcircled{$i$} as the $i$th row of the matrix, we have two linear dependencies,
$\text{\textcircled{5}} + \text{\textcircled{6}} + v \text{\textcircled{1}} = \text{\textcircled{2}}$ and
$\text{\textcircled{7}} + \text{\textcircled{8}} + v \text{\textcircled{3}} = \text{\textcircled{4}}$.
There is no longer a unique solution to the \eqref{eqn:linear-fair-compat}; instead, we have a two-parameter family of solutions,
\begin{equation}
\begin{aligned}
\fctvar(\alpha,\beta) = & \frac{1}{\cntvar (1-v)}
\begin{pmatrix}
v (\cntvar_1 (1-v) - \alpha) \\
v \alpha \\
(1 - v) (\cntvar_1 (1-v)-\alpha) \\
(1-v) \alpha \\
v (\cntvar_0 (1-v) - \beta) \\
v \beta \\
(1-v) ( \cntvar_0 (1-v) -\beta) \\
(1-v) \beta  \\
\end{pmatrix}, \\
& 0\le\alpha\le(1-v)\cntvar_1, \quad 0\le\beta\le(1-v)\cntvar_0. \\
\end{aligned}
\end{equation}
Furthermore, this family of solutions satisfies $\marginalmat \fctvar_0 = \marginalvec$ \iff{} $v = \posvar_0 / \cntvar_0 = \posvar_1 / \cntvar_1$, i.e.\ the base rates are equal and furthermore the score for both bins is equal to the base rate. 

\section{Proof of \Cref{thm:tr2}}
\label{sec:calib-proof}

% \begin{corollary}[Re-derivation of \cite{kleinberg2016inherent}]%
% \label{thm:tr2}
% Consider a classifier that satisfies CG, PCB and NCB fairness simultaneously.
% Then, at least one of the following statements is true:
% \begin{enumerate}[nolistsep]
%     \item the data have equal base rates for each class $\fairvar$, i.e.\
%     $\posvar_0/\cntvar_0 = \posvar_1/\cntvar_1$, or
%     \item the classifier has perfect prediction, i.e.\  $v_0 = 0$ and $v_1 = 1$.
% \end{enumerate}
% In other words, \{CG, PCB, NCB\} is incomaptible.
% \end{corollary}
\begin{proof}
Consider the product
\begin{equation}
    \begin{pmatrix}
    \fairmat_\textsc{pcb}\\
    \fairmat_\textsc{ncb}
    \end{pmatrix}
    z_0
    =
    \frac{M_1 N_0 - M_0 N_1}{N}
    \begin{pmatrix}
    \frac{v_0 v_1}{M_0 M_1} \\
 \frac{(1-v_0)(1-v_1)}{(M_0 - N_0)(M_1 - N_1)}
    \end{pmatrix}.
\end{equation}

This product equals the zero vector (and hence satisfies both PCB and NCB) \iff{} either of the conditions of the Corollary hold. (The last solution, $v_0 = 1$ and $v_1 = 0$, is inadmissible since $v_0 < v_1$ by assumption.)
\end{proof}

\section{Proof of \Cref{thm:cgdp}}
\label{sec:cgdp}

\begin{proof}
The result follows from solving
\begin{equation}
    \fairmat_\textsc{dp} \fctvar_0
    = \frac{\posvar_1 \cntvar_0 - \posvar_0 \cntvar_1}{N^2 (v_1 - v_0)}  = 0.
\end{equation}
\end{proof}

\section{Proof of \Cref{thm:cgpp}}
\label{sec:cgpp}

\begin{proof}
The result follows from solving
\begin{equation}
\small
\fairfn_\textsc{pp}(\fctvar_0) = v_1 (1-v_1) \left({(M_1 - N_1 v_0)}^2 - {(M_0 - N_0 v_0)}^2\right) = 0
\end{equation}
which is true \iff{} either condition in the Corollary is true. (The last case, $v_1 = 0$, is inadmissible by assumption.)
\end{proof}

In addition, here is a situation of fairness ``for free'',
in the sense that one notion of fairness automatically implies another.

\begin{corollary}
\label{thm:cgefor}
Consider a classifier that satisfies CG fairness.
Then, the classifier also satisfies EFOR fairness.
In other words, \{CG, EFOR\} is incompatible.
\end{corollary}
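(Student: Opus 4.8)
The plan is to reuse the ``solve, then substitute'' strategy already used for \Cref{thm:tr2}, \Cref{thm:cgdp} and \Cref{thm:cgpp}. By \Cref{thm:calib_incomp}, a classifier satisfying CG has its \fct{} forced to equal the single vector $z_0$ of \eqref{eqn:impossible-sol} (which exists only under \eqref{eqn:impossible-sol-cond}), so it suffices to verify that this $z_0$ also annihilates the EFOR function. From \Cref{tab:def_fairness}, EFOR is the quadratic $\frac{1}{2}\fctvar^{T}\fairqfmat_\textsc{efor}\fctvar = (TN_1 FN_0 - TN_0 FN_1)/\cntvar^2$, so the whole task reduces to showing $TN_1 FN_0 - TN_0 FN_1 = 0$ at $\fctvar = z_0$.

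Reading the second, fourth, sixth and eighth coordinates off \eqref{eqn:impossible-sol}, for each group $a$ one gets
\[
FN_a = \frac{v_0\,(\cntvar_a v_1 - \posvar_a)}{\cntvar(v_1 - v_0)}, \qquad
TN_a = \frac{(1-v_0)\,(\cntvar_a v_1 - \posvar_a)}{\cntvar(v_1 - v_0)},
\]
i.e. $FN_a$ and $TN_a$ are the same group-dependent scalar $(\cntvar_a v_1 - \posvar_a)/(\cntvar(v_1-v_0))$ times the group-\emph{independent} factors $v_0$ and $1-v_0$. Hence $TN_1 FN_0 - TN_0 FN_1$ is proportional to $(\cntvar_1 v_1 - \posvar_1)(\cntvar_0 v_1 - \posvar_0) - (\cntvar_0 v_1 - \posvar_0)(\cntvar_1 v_1 - \posvar_1) = 0$, so EFOR holds. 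In fact one does not even need $z_0$: the defining equations of CG from \Cref{sec:linear}, $FN_a = v_0(FN_a + TN_a)$ for $a\in\{0,1\}$, say that the false omission rate $\Prob(y=1\mid\hat y=0,a)$ equals $v_0$ in \emph{both} groups, which is exactly the EFOR identity; the $z_0$-level computation is just the matrix-form shadow of this.

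Finally, the ``in other words'' clause: because EFOR is satisfied on every CG-feasible tensor, adjoining it to any set containing CG leaves the feasible set untouched; in particular \{CG, EFOR\} has precisely the feasible set of \{CG\}, namely $\{z_0\}$ when \eqref{eqn:impossible-sol-cond} holds and $\emptyset$ otherwise, so it is incompatible in the same singleton/empty sense as \{CG\} (and this is why EFOR can be appended ``for free'' to the CG-rows of \Cref{tab:cg-impossible}). I expect no genuine obstacle here: the only care needed is the coordinate ordering of $\fctvar$ and the shared normalization $1/(\cntvar(v_1-v_0))$ during the substitution, plus --- if one prefers the rate-based phrasing --- a one-line note that the degenerate case $FN_a+TN_a=0$ (EFOR vacuous) is exactly the case excluded by requiring $z_0$ to be a valid nonnegative tensor.
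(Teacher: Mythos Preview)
Your proposal is correct and takes essentially the same approach as the paper: substitute the unique CG-feasible tensor $z_0$ from \Cref{thm:calib_incomp} into $\fairfn_\textsc{efor}$ and verify it vanishes identically. Your additional rate-level observation---that the two rows $FN_a = v_0(FN_a + TN_a)$ of $\fairmat_\textsc{cg}$ already force both false omission rates to equal $v_0$, so EFOR is immediate without even computing $z_0$---is a nice shortcut that the paper does not spell out.
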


\begin{proof}
$\fairfn_\textsc{efor}(\fctvar_0) = 0$
vanishes identically.
\end{proof}

\section{Proof of \Cref{thm:chouldechova}}
\label{sec:proof-chould}

\begin{proof}
Finding the solution to $\fairfn_\textsc{pp}(\fctvar) = \fairfn_\textsc{efpr}(\fctvar) = \fairfn_\textsc{efnr}(\fctvar) = 0$ and also the linear system $\marginalmat\fctvar = \marginalvec$ yields the three conditions of the Theorem.
\end{proof}

\section{CG--accuracy trade-offs}%
\label{sec:cg-accuracy}

In the paper, we have only considered the case when $\lambda = \infty$ in the LAFOP: we only consider when the fairness criteria are satisfied exactly yielding several fairness--fairness trade-off results without heed to the accuracy of the classifiers. Nonetheless, recall that LAFOP allows us to express both fairness--accuracy and fairness--fairness trade-offs by introducing an accuracy objective along with a fairness regularizer. In this section, we show how the LAFOP can be used to theoretically analyze a simple fairness--accuracy trade-off. We present a small result that is relevant to the CG--accuracy trade-off considered in \cite{liu2019cg}.
\begin{theorem}%
\label{thm:liu}
Let $\baserate = (M_0 + M_1)/N$ be the base rate.
Consider a classifier that satisfies CG with $0\le v_0 < v_1 \le 1$.
Then, perfect accuracy is attained \iff{}
\begin{equation}
\frac{v_0 (1-2 v_1)}{1 - v_1 + v_0}
= \baserate \le \frac 1 8, \quad
\left\vert v_0 - \frac 1 4\right\vert \le \frac{\sqrt{1-8\alpha}} 4.
\end{equation}
\end{theorem}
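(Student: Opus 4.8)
The plan is to reduce the question entirely to the unique tensor $\fctvar_0$ of \Cref{thm:calib_incomp} and then determine when it can be error-free. Since $v_0 \neq v_1$, the $8\times8$ matrix formed by stacking $\fairmat_\textsc{cg}$ on top of $\marginalmat$ is invertible, so by \Cref{thm:calib_incomp} \emph{every} classifier satisfying CG has fairness--confusion tensor equal to the $\fctvar_0$ in \eqref{eqn:impossible-sol}. Thus ``perfect accuracy''---zero classification error $\perfvec\cdot\fctvar_0 = 0$ with $\perfvec$ as in \eqref{eqn:LAFOP}---is a bare algebraic condition on $v_0$, $v_1$, and the dataset counts, and the proof is essentially a feasibility analysis of that condition.

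First I would compute $\perfvec\cdot\fctvar_0$. The vector $\perfvec = (0,1,1,0,0,1,1,0)^T$ selects exactly the $FN_a$ and $FP_a$ coordinates of $\fctvar_0$, namely $v_0(N_a v_1 - M_a)/(N(v_1-v_0))$ and $(1-v_1)(M_a - N_a v_0)/(N(v_1-v_0))$; summing over $a\in\twoval$ the group index drops out, with the $M_a$-terms aggregating into $M_0+M_1$ and the $N_a$-terms into $N_0+N_1$. Substituting $M_0+M_1 = \baserate N$ and $N_0+N_1 = N$ eliminates every group-level quantity other than $\baserate$, and $\perfvec\cdot\fctvar_0 = 0$ collapses to a single scalar identity which rearranges to $\baserate = v_0(1-2v_1)/(1-v_1+v_0)$, the first equation in the statement.

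Next I would invert this relation, solving it for $v_1$ to obtain $v_1 = (v_0(1-\baserate)-\baserate)/(2v_0-\baserate)$; perfect accuracy is attainable exactly when this value satisfies the standing score constraints $0\le v_0 < v_1 \le 1$. Clearing denominators in $v_1 > v_0$ gives the quadratic inequality $2v_0^2 - v_0 + \baserate < 0$ in $v_0$, whose discriminant is exactly $1-8\baserate$; hence a feasible $v_0$ exists only when $\baserate \le \tfrac18$, and then $v_0$ must lie between the roots $\tfrac14(1\pm\sqrt{1-8\baserate})$, i.e.\ $|v_0 - \tfrac14| \le \tfrac14\sqrt{1-8\baserate}$. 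I would then verify that the remaining requirements are automatic on this interval: $v_1\le 1$ reduces to $v_0(\baserate - 1)\le 0$, which holds since $\baserate<1$, while $v_0\ge0$ and the sign assumption $2v_0-\baserate>0$ (used when clearing denominators) hold because the smaller root does not drop below $\baserate/2$. Together these make the two displayed conditions necessary and sufficient.

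The step I expect to be the main obstacle is pinning down the ``only if'' direction so that the result is a true equivalence rather than a one-way implication, which requires treating the degenerate cases separately: $v_0=0$ (where $FN_a$ vanishes automatically and the identity forces $\baserate=0$), the boundary $2v_0=\baserate$ (where solving for $v_1$ is illegitimate and one must reason directly from the scalar identity), and $\baserate=\tfrac18$ (double root, pinning $v_0=\tfrac14$). It is also worth stating explicitly that this clean characterization uses the \emph{linear} constraint system underlying \eqref{eqn:LAFOP} in the $\lmvar\to\infty$ limit rather than full membership in $\simplex$: re-imposing $\fctvar_0\ge0$ would additionally tie $v_0$ and $v_1$ to the per-group base rates $M_a/N_a$ via \eqref{eqn:impossible-sol-cond}, which is a separate feasibility question. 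Everything past these checks is routine algebra.
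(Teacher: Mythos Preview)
Your approach is essentially identical to the paper's: both reduce to the unique tensor $\fctvar_0$ from \Cref{thm:calib_incomp}, read off the equality condition from $\perfvec\cdot\fctvar_0=0$, and extract the inequalities from the standing constraint $0\le v_0 < v_1 \le 1$. The paper's proof is a three-line sketch that simply asserts these steps, whereas you spell out the intermediate algebra (solving for $v_1$, clearing denominators, analyzing the discriminant) and flag the degenerate cases and the omission of the nonnegativity constraint \eqref{eqn:impossible-sol-cond}; none of this departs from the paper's strategy.
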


\begin{proof}
The case of necessity ($\Rightarrow$) follows immediately from solving $\perfvec\cdot\fctvar_0 = 0$, where $\fctvar_0$ is defined in \Cref{thm:calib_incomp}.
The inequality conditions follow immediately from the constraint $0\le v_0 < v_1 \le 1$.
The case of sufficiency ($\Leftarrow$) follows immediately from \Cref{thm:calib_incomp} and substituting the equality condition.
\end{proof}

The condition of this theorem relates the scores $v_0$ and $v_1$ to the base rate of the data,
thus providing simple, explicit data dependencies that are necessary and sufficient.

\section{Experiment Details}
\label{sec:experiment_details}

% \subsection{Dataset}
% \label{sec:dataset_details}

% The synthetic dataset consists of two-dimensional data $\data = (x_0, x_1)$ that follow the Gaussian distributions
% \begin{equation}
% \small
% \begin{aligned}
% \data | \truthvar = 1 \sim & \gaussian\left(\begin{pmatrix}
% 2\\2
% \end{pmatrix}, \begin{pmatrix}
% 5 & 1 \\ 1 & 5
% \end{pmatrix} \right) \qquad 
% \data | \truthvar = 0 \sim & \gaussian\left(\begin{pmatrix}
% -2\\-2
% \end{pmatrix}, \begin{pmatrix}
% 10 & 1 \\ 1 & 3
% \end{pmatrix} \right).
% \end{aligned}
% \end{equation}

% We further introduce two cases for the protected attribute.

% \paragraph{Unbiased synthetic dataset}
% The protected attribute value is independent of $\data$ and $\truthvar$,
% and is instead distributed according to the Bernoulli distribution $\fairvar \sim \bernoulli\left(\frac 1 2\right)$.
% This notion of fairness was described in \cite{calders2009indep}.

% \paragraph{Biased synthetic dataset}
% The protected attribute value is assigned as $a|\data = \sgn(x_0)$.
% This dataset models a situation when some features (but not all)
% encode a protected attribute. 

\subsection{Optimization}
\label{sec:optimization_details}

For solving the optimization problems, we used solvers in the \texttt{scipy} package for Python \cite{scipy}. 
For linear fairness constraints, we used the simplex algorithm \cite{dantzig1963simplex},
and for other constrained optimization forms,
we used sequential least-squares programming (SLSQP) solver \cite{kraft1988slsqp,kraft1994slsqp}.

\subsection{Model-agnostic multi-way fairness--accuracy trade-offs}
\label{sec:multi}

We have only considered situations where zero or one parameter is sufficient to simultaneously specify the fairness strength for every fairness function, i.e.\ $\lmvar=\lmvar_0= \cdots= \lmvar_{K-1}$.
In this section, we generalize this and allow each regularization parameter to vary freely. It is then natural to consider the \MLAFOP{} (MLAFOP): $\argmin_{\fctvar\in\simplex}
    {(\perfvec \cdot \fctvar)}^2 + \sum_{i=0}^{K-1} \lmvar_i \Vert \fairmat^{(i)} \fctvar \Vert^2_2$,
where the regularization parameters $\lmvar_i$ now take different values across each of the $K$ fairness constraints. This allows for a general inspection of the individual effect of fairness constraints in a group.

% For instance, a three-way trade-off among EOd, DP, and accuracy can be visualized as a contour plot, similar to the ones shown in \Cref{fig:multi_analysis} of Appendix~\ref{sec:multi-fig}. For general $(K+1)$-way trade-offs involving $K$ fairness constraints and accuracy, it may be convenient to visualize specifically at two-dimensional slices along the general four-dimensional surface. We plot $K$ different plots using the $1-\perfepsvar$ values obtained from optimizing MLAFOP, by varying one $\lmvar$ for one fairness constraint while keeping the other $\lmvar$ values fixed. It is possible to implicitly rank the $K$ fairness measures in terms of their influence on $\perfepsvar$ by considering the sensitivity of $\perfepsvar$ to changes in the corresponding $\lmvar_i$.

\begin{figure}[ht!]
\centering
\includegraphics[width=0.5\textwidth]{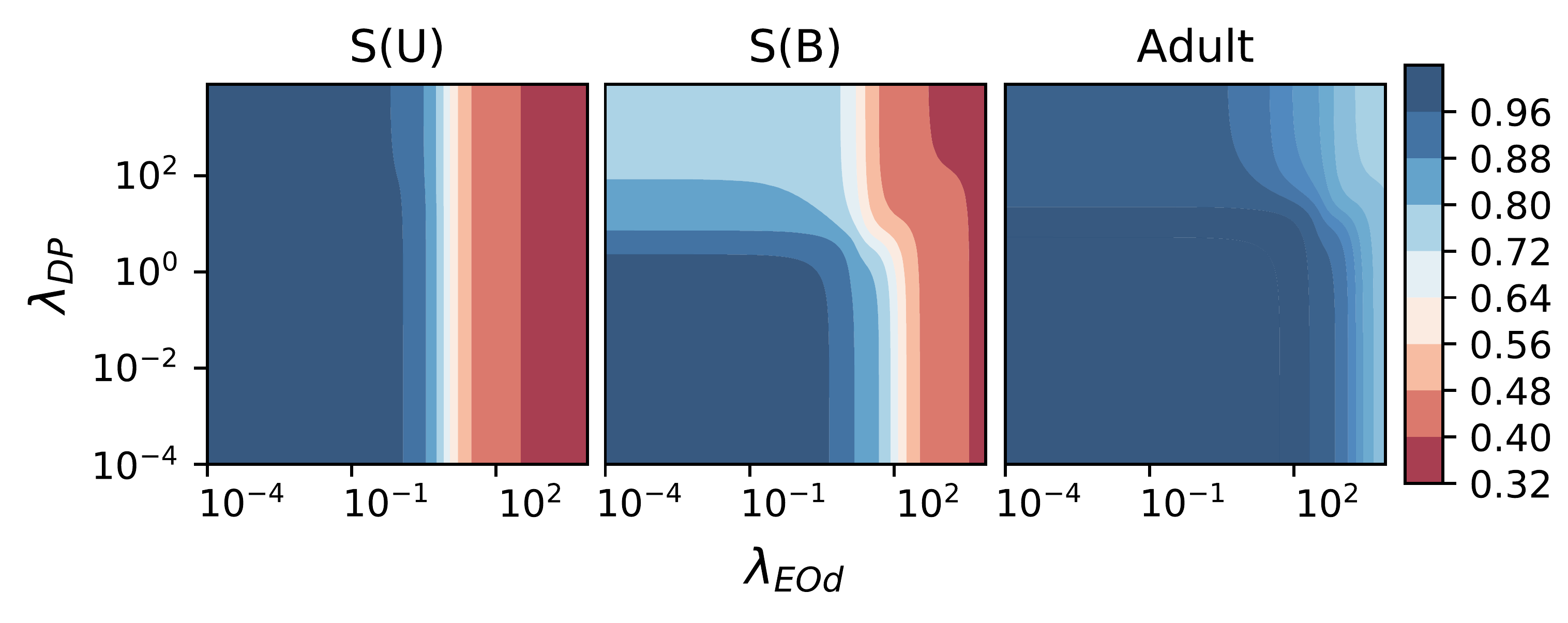}
\caption{Fairness--fairness--accuracy trade-off analysis using contour plot of accuracy with varying regularization strengths of Demographic Parity (DP) and Equalized Odds (EOd) for the unbiased synthetic dataset (left), biased synthetic dataset (middle), and Adult dataset (right). The contours show how the regularization strength of each fairness individually influence the accuracy ($1-\perfepsvar$) given the other (accuracy of 1.0 being the accuracy of the Bayes classifier). For the unbiased synthetic data, the accuracy change along the vertical axis (DP) is practically nonexistent given EOd, while along the horizontal axis (EOd) the change is drastic. Other datasets demonstrate more complex relationships.}%
\label{fig:multi_analysis}
\end{figure}

\begin{figure}[ht]
\centering
\includegraphics[width=0.7\textwidth]{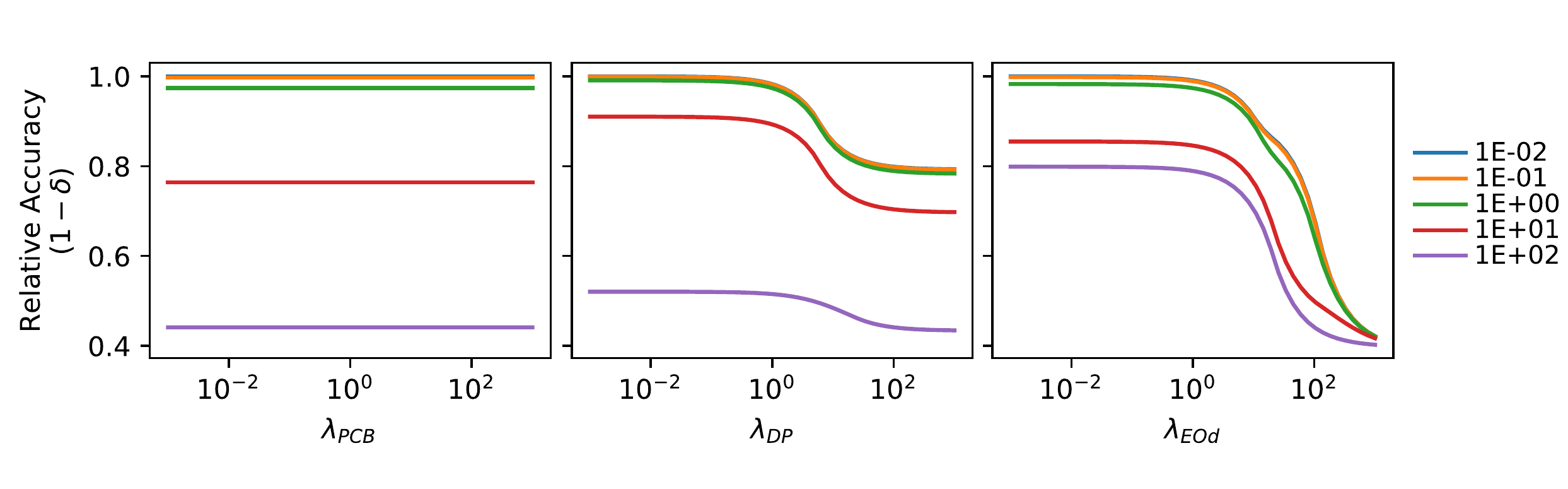}
\caption{The four-way trade-off between accuracy, PCB, EOd, and DP
in the biased synthetic dataset (\Cref{sec:datasets}).
Shown here is the ($1\!-\!\perfepsvar$) value as a function of some regularization strength $\lmvar_\fairfn$ for some fairness function $\fairfn$,
while holding all other $\lmvar_{\fairfn^\prime}$s constant (accuracy of 1.0 being the accuracy of the Bayes classifier).
The value next to each colored line in the legend represents constant values for the fixed $\lmvar_{\fairfn^\prime}$s.
Sweeping through PCB while keeping DP and EOd fixed (left) does not change the accuracy, whereas the other plots show multiple levels of variations.
For EOd (right), the accuracy levels converge quickly to the limiting value of 0.392 as shown in \Cref{fig:eps-delta-curve}, suggesting that the accuracy is more sensitive to changes in EOd constraint strength compared to the others.
}
\label{fig:slices}
\end{figure}

For instance, a three-way trade-off among EOd, DP, and accuracy can be visualized as a contour plot, similar to the one shown in \Cref{fig:multi_analysis}. And for general $(K+1)$-way trade-offs involving $K$ fairness constraints and accuracy, we visualize two-dimensional slices along the $K+1$-dimensional surface. For example, consider a four-way trade-off between a group of three fairness definitions (DP, EOd, PCB) and accuracy. \Cref{fig:eps-delta-curve} already showed that imposing PCB given (DP, EOd) does not affect $\perfepsvar$, which implies that PCB is the weakest in terms of its influence on $\perfepsvar$. To get more information, for the S(B) dataset, we show in \Cref{fig:slices} three cases of varying one $\lmvar$ for one fairness constraint while keeping the other $\lmvar$ values fixed in MLAFOP. Sweeping through PCB condition (left) does not affect $1-\perfepsvar$ at fixed EOd and DP levels, confirming the observation from \Cref{fig:eps-delta-curve}. Sweeping through DP conditions while keeping PCB and EOd strengths fixed (middle) results in a slight drop, but not big enough to make all levels to converge to values reported in \Cref{fig:eps-delta-curve} (0.392). Sweeping through EOd while keeping PCB and DP strengths fixed (right) on the other hand results in significant changes for all levels and convergence to the value 0.392, suggesting EOd is stronger than DP in terms of its influence on changing $\perfepsvar$. This notion of relative influence of fairness deserves further investigation, to see if these preliminary results are robust across other slices and datasets. Nonetheless, such analysis demonstrates a clear picture of how different notions of fairness interact with one another when they are to be imposed together.  

\newpage
% \subsection{Three-way trade-off contour plot}
% \label{sec:multi-fig}

% A three-way trade-off between EOd, DP, and accuracy can be visualized as a contour plot, as shown in \Cref{fig:multi_analysis}.
% Such contours allow us to interpolate multiple points in the plane and observe the changes in $\perfepsvar$ more effectively across different settings.
% For the unbiased synthetic dataset (left), EOd dominates DP in terms of the sensitivity of $\perfepsvar$ to the changes in regularization strength, as the variation of $\perfepsvar$ is the strongest along the horizontal axis. On the other hand, the relationship is relatively less one-sided for the biased synthetic and the Adult dataset. More generally, the \name{} diagnostic can be used to analyze multi-way trade-offs with more than two fairness constraints dealt here, and one example is presented in \Cref{sec:multi}. 

\subsection{Connection to the post-processing methods for fair classification}
\label{sec:post-process-apdx}

We can explicitly rewrite the constraints in \eqref{eqn:hardt} using $\hat{z}$ and $\tilde{z}$, which respectively correspond to the fairness--confusion tensor of the given pre-trained classifier $\hat{Y}$ and the derived fair classifier $\tilde{Y}$:

\begin{align*}
    \gamma_0(\tilde{Y}) = \gamma_1(\tilde{Y}) &\Longleftrightarrow \fairmat_{\textsc{EOd}} \tilde{\fctvar} = 0 \\ 
    \gamma_0(\tilde{Y}) \in P_0(\hat{Y}) &\Longleftrightarrow \left(\frac{\tilde{\fctvar}_7}{\tilde{\fctvar}_7 + \tilde{\fctvar}_8}, \frac{\tilde{\fctvar}_5}{\tilde{\fctvar}_5 + \tilde{\fctvar}_6}\right) \in \\
    &\quad \text{convhull}\left\{(0,0), \left(\frac{\hat{\fctvar}_7}{\hat{\fctvar}_7 + \hat{\fctvar}_8}, \frac{\hat{\fctvar}_5}{\hat{\fctvar}_5 + \hat{\fctvar}_6}\right), \left(\frac{\hat{\fctvar}_8}{\hat{\fctvar}_7 + \hat{\fctvar}_8}, \frac{\hat{\fctvar}_6}{\hat{\fctvar}_5 + \hat{\fctvar}_6}\right), (1,1)\right\} \numberthis \label{eqn:postprocess2fact1} \\
    \gamma_1(\tilde{Y}) \in P_1(\hat{Y}) &\Longleftrightarrow \left(\frac{\tilde{\fctvar}_3}{\tilde{\fctvar}_3 + \tilde{\fctvar}_4}, \frac{\tilde{\fctvar}_1}{\tilde{\fctvar}_1 + \tilde{\fctvar}_2}\right) \in \\
    &\quad \text{convhull}\left\{(0,0), \left(\frac{\hat{\fctvar}_3}{\hat{\fctvar}_3 + \hat{\fctvar}_4}, \frac{\hat{\fctvar}_1}{\hat{\fctvar}_1 + \hat{\fctvar}_2}\right), \left(\frac{\hat{\fctvar}_4}{\hat{\fctvar}_3 + \hat{\fctvar}_4}, \frac{\hat{\fctvar}_2}{\hat{\fctvar}_1 + \hat{\fctvar}_2}\right), (1,1)\right\}
    \numberthis \label{eqn:postprocess2fact2}
\end{align*}
where the subscript $i$ of the fairness--confusion tensor corresponds to the $i$-th element in their vector representation as in \Cref{sec:linear}. By setting the objective function to be the classification error, imposing EOd fairness constraint and the model-dependent feasibility constraints in \eqref{eqn:postprocess2fact1} and \eqref{eqn:postprocess2fact2}, MS-LFAOP is the same optimization problem as the post-processing methods, now over the space of the fairness--confusion tensors. The FACT Pareto frontier obtained by solving MS-LAFOP therefore can assess the trade-off exhibited by any classifier post-processed in such ways.

In practice, the post-processing method solves \eqref{eqn:hardt} by parameterizing $\tilde{Y}$ with two variables for each group $a = 0,1$: ${\sf Pr}(\tilde{Y} = 1 | \hat{Y} = 1, A = a), {\sf Pr}(\tilde{Y} = 1 | \hat{Y} = 0, A = a)$. \cite{hardt2016equality}. These values are called the \emph{mixing rates}, as they indicate the probability of labels that should be flipped or kept for each group when post-processing the given classifier $\hat{Y}$. The algorithm then randomly selects the instances for each group to flip according to these mixing rates. These mixing rates can also be written in terms of the \fct{} $\tilde{z}$ and $\hat{z}$, by using the fact that 
\begin{align*}
{\sf Pr}(\tilde{Y} = \tilde{y} | Y = y, A = a) = {\sf Pr}(\tilde{Y} &= \tilde{y} | \hat{Y} = 1, A = a) {\sf Pr}(\hat{Y} = 1 | Y = y, A = a) + \\&{\sf Pr}(\tilde{Y} = \tilde{y} | \hat{Y} = 0, A = a) {\sf Pr}(\hat{Y} = 0 | Y = y, A = a),
\end{align*} and that ${\sf Pr}(\tilde{Y} = \tilde{y} | Y = y, A = a)$, ${\sf Pr}(\hat{Y} = \hat{y} | Y = y, A = a)$ terms are essentially what $\tilde{z}$ and $\hat{z}$ encode. Therefore, by using $\tilde{z}$ obtained from the MS-LAFOP above, we can compute the mixing rates to post-process the given classifier.

\end{document}